\newtheorem{assumption}{Assumption}
\newlength{\minipagewidth}
\newcommand{\F}{\mathcal{F}}
\newcommand{\real}{\mathbb{R}}
\newcommand{\Sw}{\mathcal{S}}
\newcommand{\HH}{\mathcal{H}}
\newcommand{\TT}{\mathcal{T}}
\newcommand{\Iw}{\mathcal{I}}
\newcommand{\DD}{\mathcal{D}}
\newcommand{\UU}{\mathcal{U}}
\newcommand{\OO}{\mathcal{O}}
\newcommand{\trace}[1]{\mbox{tr}\left[#1\right]}
\newcommand{\II}[1]{\mathbb{I}_{\left\{#1\right\}}}
\newcommand{\PP}[1]{\mathbb{P}\left[#1\right]}
\newcommand{\E}{\mathbb{E}}
\newcommand{\EE}[1]{\mathbb{E}\left[#1\right]}
\newcommand{\EEt}[1]{\mathbb{E}_t\left[#1\right]}
\newcommand{\EEG}[1]{\mathbb{E}_G\left[#1\right]}
\newcommand{\EEk}[1]{\mathbb{E}_k\left[#1\right]}
\newcommand{\EEcc}[2]{\mathbb{E}\left[\left.#1\right|#2\right]}
\newcommand{\ra}{\rightarrow}
\newcommand{\norm}[1]{\left\|#1\right\|}
\newcommand{\ev}[1]{\left\{#1\right\}}
\newcommand{\pa}[1]{\left(#1\right)}
\newcommand{\bpa}[1]{\bigl(#1\bigr)}
\newcommand{\BPA}[1]{\Biggl(#1\Biggr)}
\newcommand{\wh}{\widehat}
\newcommand{\wt}{\widetilde}
\newcommand{\tw}{\wt{w}}
\newcommand{\bw}{\overline{w}}
\newcommand{\hw}{\wh{w}}
\newcommand{\iprod}[2]{\left\langle#1, #2\right\rangle}
\newcommand{\Iprod}[2]{\bigl\langle#1, #2\bigr\rangle}
\definecolor{PalePurp}{rgb}{0.66,0.57,0.66}
\title[Iterate averaging as regularization for SGD]{Iterate averaging as regularization for stochastic gradient 
descent} 
\author{\Name{Gergely Neu} \Email{gergely.neu@gmail.com}\\
 \addr Universitat Pompeu Fabra, Barcelona, Spain \\
 \AND
 \Name{Lorenzo Rosasco} \Email{lrosasco@mit.edu}\\
 \addr LCSL, Massachusetts Institute of Technology, Cambridge MA, USA
 \\
 Istituto Italiano di Tecnologia, Genova, Italy
 \\
 Universit\`a degli Studi di Genova, Genova, Italy
 }
\begin{document}

\maketitle

\begin{abstract}
 We propose and analyze  a variant of the classic  Polyak--Ruppert averaging scheme,  broadly used in stochastic gradient methods.  Rather 
than  a uniform average of the iterates, we consider a weighted average, with  weights decaying in a  geometric fashion.   In the context of 
linear least squares regression, we show that this averaging  scheme has a the same regularizing effect, and indeed is asymptotically equivalent,  to ridge regression. 
In particular, we derive finite-sample bounds for  the proposed approach that match  the best known results for regularized stochastic 
gradient methods.
\end{abstract}

\begin{keywords}
stochastic gradient descent, least squares, regularization, Tikhonov regularization
\end{keywords}

\section{Introduction}

Stochastic gradient  methods are  ubiquitous in machine learning, where they are typically referred to as SGD (stochastic gradient 
descent\footnote{Albeit in general they might not  be descent methods.}). Since these incremental methods use little computation per data 
point, they are naturally adapted to processing very large data sets or streams of data.
Stochastic gradient methods have a long history, starting from the pioneering paper by Robbins and Monro \citep{RM51}. For a more 
detailed discussion, we refer to the excellent review given by \cite{NJLS09}. In the present paper, we propose a variant of SGD based on 
a weighted average of the iterates. The idea of averaging iterates goes back to \cite{P91} and \cite{R88}, and indeed it is 
often referred to as Polyak--Ruppert averaging (see also \citealp{PJ92}).  In this paper, we study SGD in the context of the linear 
least-squares regression problem, considering both  finite- and infinite-dimensional settings. This latter  case allows to derive results 
for nonparametric learning with kernel methods---we refer to the appendix in \cite{RV15} for a detailed discussion on this subject. 
The study of SGD for least squares is classical in stochastic approximation \citep{K03}, where it is commonly known as the 
\emph{least-mean-squares} (LMS) algorithm. 
In the context of machine learning theory, prediction-error guarantees for online algorithms can be derived through a regret analysis in a 
sequential-prediction setting and a so called online-to-batch conversion \citep{SS12,Haz16}. 
 For example, results  in \cite{Vov01,AW01} and \cite{HAK07} directly apply to  least squares. 
 Alternatively, one can directly analyze SGD in the stochastic setting, as done by \cite{YS06,YP08, TY14}, where the 
last iterate  and decaying step-size are considered, and more recently by 
  \cite{RV15,LR17} where  multiple passes and mini-batching are considered. 
A recently popular approach is combining \emph{constant} step-sizes with Polyak--Ruppert averaging, which was first shown to lead to  
strong finite-time prediction guarantees after a single pass on the data by \citet{BM13}. This approach was first studied by 
\citet{GyW96} and subsequent progress was made by \cite{DB15,DFB16,JKKNS16,JKKNPS17,LS18}. 

In this paper we propose and analyze a novel form of weighted average, given  by a sequence of weights  decaying geometrically, so that 
the first iterates have more weight. Our main technical contribution is a characterization of the properties of this particular 
weighting scheme that we call \emph{geometric Polyak--Ruppert averaging}. Our first 
result shows that SGD with geometric Polyak--Ruppert averaging is in expectation equivalent to considering SGD with a regularized loss 
function, and both sequences converge to the Tikhonov-regularized solution of the expected least-squares problem. The regularization 
parameter is a tuning parameter defining the sequence of geometric weights. This result strongly suggests that geometric Polyak--Ruppert 
averaging can be used to control the bias-variance properties of the corresponding SGD estimator. Indeed, our main result quantifies this 
intuition deriving a finite-sample bound,  matching previous results for regularized SGD, and   leading to   optimal rates \citep{T08}.
While averaging is widely considered to have a stabilizing effect,   to the best of 
our knowledge this is the first result  characterizing  the stability of an averaging scheme in terms of its regularization properties and 
corresponding prediction guarantees. Our findings can be contrasted to recent results on tail averaging \citep{JKKNS16} and provide some 
guidance on when and how different averaging strategies can be useful. On a high level, our results suggest that geometric averaging should 
be used when the data is poorly-conditioned and/or relatively small, and tail averaging should be used in the opposite case.
Further, from a practical point of view,  geometric Polyak--Ruppert averaging provides an efficient approach to perform model 
selection, since 
{\em a regularization path} \citep{HTF01} is computed efficiently.  Indeed, it is possible to compute a full pass of SGD  \emph{once} and 
store all the iterates, to then rapidly compute off-line the solutions corresponding to different geometric weights (or tail averages), 
hence different regularization levels. As the averaging operation is entirely non-serial, this method lends itself to trivially easy 
parallelization.

The rest of the paper is organized as follows. In Section~\ref{sec:pre}, we introduce the necessary background and present the geometric 
Polyak--Ruppert averaging scheme. In Section~\ref{sec:expectation}, we show the asymptotic equivalence between ridge regression and 
constant-stepsize SGD with geometric iterate averaging. Section~\ref{sec:main} presents and dicuss our main results regarding 
the finite-time prediction error of the method.
Section~\ref{sec:analysis} describes the main steps in the proofs. The full proof is included in the Appendix. 
We conclude this section by  introducing some basic notation used throughout the paper.

\paragraph{Notation.}  Let $\HH$ be a separable Hilbert 
space with the inner product $\iprod{\cdot}{\cdot}$ and  norm $\norm{\cdot} = \sqrt{\iprod{\cdot}{\cdot}}$. We let 
$v\otimes v$ be the outer product of vector $v\in\HH$, which acts on $u\in\HH$ as the rank-one operator $\pa{v\otimes v}u = \iprod{u}{v}v$. 
For a linear operator $A$ acting on $\HH$,  we  let $A^*$ be  its adjoint and  $\norm{A} = 
\sqrt{\trace{A^*A}}$ the Frobenius norm. An operator $A$ is positive semi-definite (PSD) if it satisfies $\iprod{x}{Ax}\ge 0$ for all $x\in 
\HH$ and 
Hermitian if $A = A^*$.   We use $A\succcurlyeq 0$ to denote that an operator $A$ is Hermitian and PSD (in short, HPSD). For HPSD operators 
$A$ and $B$, we use $A\succcurlyeq B$ to denote $A - B \succcurlyeq 0$. For a HPSD operator $A$, we use $A^{1/2}$ to denote the unique
HPSD operator satisfying $A^{1/2}A^{1/2} = A$. The identity operator on $\HH$ is denoted as $I$. Besides standard asymptotic notation 
like $\OO\pa{\cdot}$ or $o\pa{\cdot}$, we will sometimes use the cleaner but less standard notation $a\lesssim b$ to denote $a = 
\OO\pa{b}$. We will consider algorithms that interact with stochastic data in an sequential fashion. The sequence of random variables 
observed during the interaction induce a filtration $\pa{\F_t}_{t\ge 0}$. We will use the shorthand $\EEt{\cdot} = \EEcc{\cdot}{\F_t}$ to 
denote expectations conditional on the history. 

\section{Preliminaries}\label{sec:pre}
We study the problem of linear regression under the square loss, more commonly known as linear  least squares regression. The 
objective in this problem is to minimize the expected risk
\begin{equation}\label{LS}
 R(w) = \EE{\pa{\iprod{w}{x} - y}^2},
\end{equation}
where $w\in\HH$ is a parameter vector, $x\in\HH$ is a covariate and $y\in\real$ is a label, with $(x,y)$ drawn from a fixed (but unknown) 
distribution $\DD$. Letting 
$
 \Sigma = \EE{x\otimes x}
$
denote the covariance operator, the minimizer of the risk is given by
\begin{equation}\label{inverse}
 w^* = \Sigma^{-1} \EE{x y}
\end{equation}
and satisfies $R(w^*) = \inf_{w\in \HH} R(w)$. We assume $w^*$ to exist, since in general this might not be true in infinite dimensions. 
Also we abuse the notation in Eq.~\eqref{inverse} since in general $\Sigma$ is not invertible and a pseudoinverse should be considered. 
This choice is  made only to ease the notation.

We study algorithms that take as  input a set of data points $\ev{(x_t,y_t)}_{t=1}^n$ drawn identically and independently from $\DD$ and 
output a weight vector $w$  to approximately  minimize~\eqref{LS}. The quality of an approximate solution is measured by the 
the excess risk
\[
 \Delta(w) = R(w) - R(w_*) = \norm{\Sigma^{1/2} \pa{w - w^*}}^2.
\]
To compute a solution from data,  we consider the stochastic gradient method, a.k.a.~SGD, that for least squares takes the form
\begin{align}\label{eq:SGD}
 w_{t} &= w_{t-1} - \eta_t \pa{x_{t} \iprod{x_{t}}{w_{t-1}} - x_{t} y_{t}},
\end{align}
where $(\eta_t)_t>0$ is a sequence of stepsizes (or learning rates), and $w_0 \in \HH$ is an  initial point.
Typically, a decaying stepsize sequence is  chosen to ensure convergence, see  \cite{NJLS09} and references therein.
A result relevant to our 
study is obtained  by \citet{BM13} for a finite dimensional setting ($\cal H$ is of dimension $d$). 
Unlike previous results, here it is shown that convergence for a constant stepsize $\eta$ can be proved if  Polyak--Ruppert (PR) 
averaging 
\[
 \bw_n = \frac{1}{n+1} \sum_{t=0}^n w_t
\]
is considered.   This result was later strengthened in 
various respects by \citet{DB15} and \citet*{DFB16},  notably by weakening the assumptions in \citet{BM13} and separating error terms  
related to ``bias'' and ``variance''.  We highlight one result from  \citet{DFB16} which considers the sequence
\begin{equation}\label{noise_iter}
w_{t} = w_{t-1} - \eta (\Sigma w_{t-1} - x_{t} y_{t} +\lambda w_{t-1}) ,
\end{equation}
and  under  technical assumptions discussed later,  proves the excess-risk bound 
\begin{equation}\label{eq:reg_bound}
 \EE{\Delta(\bw_n)} \lesssim \frac{\sigma^2 \trace{\Sigma^2 \pa{\Sigma+\lambda I}^{-2}}}{n} + \pa{\lambda + \frac{1}{\eta n}}^2 
\norm{\Sigma^{1/2} \pa{\Sigma + \lambda I}^{-1}\pa{w_0 - w^*}},
\end{equation}
where $\sigma^2 > 0$ is an upper bound on the variance of the label noise.
The iteration~\eqref{noise_iter} is only of theoretical interest since the covariance $\Sigma$ is not known in practice. However, the 
obtained bound is simpler to present allows easier comparison with our result. A bound slightly more complex   than~\eqref{eq:reg_bound} 
can be obtained when $\Sigma$ is not known  \citep[Theorem~2]{DFB16}.

In this paper, we propose a generalized version of Polyak--Ruppert averaging that we call \emph{geometric Polyak--Ruppert averaging}. 
Specifically, the algorithm we study computes the standard SGD iterates as given by Equation~\eqref{eq:SGD} and outputs
\begin{equation}\label{eq:geo}
 \tw_n = \frac{1}{\sum_{k=0}^n \pa{1-\eta\lambda}^k} \cdot \sum_{t=0}^n \pa{1-\eta\lambda}^t w_t
\end{equation}
after round $n$, where $\lambda \in [0,1/\eta)$ is a tuning parameter. That is, the output is a geometrically discounted (and appropriately 
normalized) average of the plain SGD iterates that puts a higher weight on initial iterates. It is easy to see that setting $\lambda = 0$ 
exactly recovers the standard form of Polyak--Ruppert averaging. 
Our main result essentially shows that the resulting estimate $\tw_n$ satisfies\footnote{The bound shown here concerns an iteration similar 
to the one 
shown on Equation~\eqref{noise_iter}, and is proved in Appendix~\ref{app:additive}. We refer to Theorem~\ref{thm:main} for the precise 
statement of our main result.}
\[
 \EE{\Delta(\tw_n)} \lesssim \pa{\frac{\eta \lambda}{2} + \frac{1}{n}}  {\sigma^2 \trace{\Sigma^2 \pa{\Sigma+\lambda I}^{-2}}} + 
\pa{\lambda + \frac{1}{\eta n}}^2 
\norm{\Sigma^{1/2} \pa{\Sigma + \lambda I}^{-1}\pa{w_0 - w^*}}
\]
under the same assumptions as the ones made by \citet{DFB16}. Notably, this guarantee matches the bound of Equation~\eqref{eq:reg_bound}, 
with the key difference being that the factor $\frac 1n$ in the first term is replaced by  $\frac 1n + \frac{\eta\lambda}{2}$. 
This observation suggests the (perhaps surprising) conclusion that geometric Polyak--Ruppert averaging has a regularization effect 
qualitatively similar to Tikhonov regularization. Before providing the proof of the main result stated above, we first show that this 
similarity is more than a coincidence. Specifically, we begin by showing in Section~\ref{sec:expectation} that the limit of the weighted 
iterates is \emph{exactly} the ridge regression solution on expectation.

\section{Geometric iterate averaging realizes Tikhonov regularization on expectation}\label{sec:expectation}
We begin by studying the relation between the averaged iterates of unregularized SGD and the iterates of regularized SGD on expectation. 
This setting will allow us to make minimal assumptions: We merely assume that $\E{\norm{x}^2}<\infty$ so that the covariance 
operator $\Sigma$ satisfies $\Sigma \preccurlyeq B^2I$ for some $B>0$.
For ease of exposition, we assume in this section that $w_0 = 0$. First, we notice the relation 
\begin{align*}
 \EEt{w_t} &= w_{t-1} - \eta \pa{\EEt{x_{t} \otimes x_{t}}w_{t-1} - \EEt{x_{t} y_{t}}} = \pa{I-\eta \Sigma} w_{t-1} + \eta \EE{xy}
\end{align*}
between $w_t$ and $w_{t-1}$, which can be iteratively applied to obtain
\[
 \EE{w_t} = \eta \sum_{k=0}^t \pa{I - \eta \Sigma}^k \EE{x y}.
\]
In contrast we also define the iterates of regularized SGD with stepsize $\gamma > 0$ and regularization parameter $\lambda$ as
\begin{equation}\label{eq:sgdtik}
 \hw_{t} = \hw_{t-1} - \gamma \pa{x_{t} \iprod{x_{t}}{w_{t-1}} - x_{t} y_{t} + \lambda \hw_{t-1}}, 
\end{equation}
which can be similarly shown to satisfy
\[
 \EE{\hw_t} = \gamma \sum_{k=0}^t \pa{I - \gamma \Sigma - \gamma \lambda I}^k \EE{xy}.
\]
This latter definition can be seen as an empirical version of the iteration in Eq.~\eqref{noise_iter}.

The following proposition reveals a profound connection between the limits of $\EE{\hw_t}$ and the geometrically discounted average 
$\EE{\tw_t}$ as $t\ra \infty$ (given that the stepsizes are small enough for the limits to exist). 
\begin{proposition}\label{prop:limit_eq}
 Let $\eta$, $\gamma$ and $\lambda$ be such that $\gamma \lambda < 1$ and $\eta = \frac{\gamma}{1-\gamma\lambda}$, and assume that $\gamma 
\le \frac{1}{B^2}$. Then, $\hw_\infty = \lim_{t\ra\infty} \EE{\hw_t}$ and $\tw_\infty = \lim_{t\ra\infty} \EE{\tw_t}$ both exist 
and satisfy
 \[\hw_\infty = \frac{\gamma}{\eta} \tw_\infty = \pa{\Sigma + \lambda I}^{-1}\EE{xy}.\]
\end{proposition}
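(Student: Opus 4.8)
The plan is to compute each limit in closed form by recognizing both $\EE{\hw_t}$ and $\EE{\tw_t}$ as (weighted) operator geometric series, and then to check that the stepsize relation $\eta = \frac{\gamma}{1-\gamma\lambda}$ is exactly what makes the two resolvents agree after scaling by $\frac{\gamma}{\eta}$. I would handle $\hw_\infty$ first, since it is the easier half. Writing $M = \gamma\pa{\Sigma + \lambda I}$, the recursion for $\EE{\hw_t}$ unrolls into the partial Neumann series $\gamma\sum_{k}\pa{I-M}^k \EE{xy}$. The assumptions $\gamma \le \frac{1}{B^2}$ and $\gamma\lambda < 1$ are there precisely to control the spectrum of $I-M$: since $0 \preccurlyeq \Sigma \preccurlyeq B^2 I$, the spectrum of $M$ lies in $[\gamma\lambda, \gamma(B^2+\lambda)] \subseteq (0,2)$, so the spectrum of $I-M$ is contained in $(-1,1)$ and bounded away from the endpoints. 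Hence the Neumann series converges in operator norm and $\hw_\infty = \gamma M^{-1}\EE{xy} = \pa{\Sigma + \lambda I}^{-1}\EE{xy}$, with $\lambda>0$ (and the pseudoinverse convention) handling the directions in the null space of $\Sigma$.

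The substance is in $\tw_\infty$. Here I would substitute the closed form $\EE{w_t} = \eta\sum_{k}\pa{I-\eta\Sigma}^k \EE{xy}$ into the definition of $\tw_n$ and interchange the order of the resulting double summation, so that the geometric weights $\pa{1-\eta\lambda}^t$ are summed first. This collapses the inner sum over $t$ into a geometric progression and, after passing $n\ra\infty$, leaves a single operator geometric series in $\pa{1-\eta\lambda}\pa{I-\eta\Sigma}$. Summing it produces the factor $\bigl[I - \pa{1-\eta\lambda}\pa{I-\eta\Sigma}\bigr]^{-1} = \frac1\eta\bigl[\lambda I + \pa{1-\eta\lambda}\Sigma\bigr]^{-1}$, and dividing by the normalizer $\sum_t \pa{1-\eta\lambda}^t \ra \frac{1}{\eta\lambda}$ expresses $\tw_\infty$ as an explicit resolvent of $\Sigma$. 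The closing step is then pure algebra: I would substitute $\eta = \frac{\gamma}{1-\gamma\lambda}$ (equivalently $\frac{\gamma}{\eta} = 1-\gamma\lambda$) and verify that $\frac{\gamma}{\eta}\tw_\infty$ reduces to $\pa{\Sigma + \lambda I}^{-1}\EE{xy}$, matching $\hw_\infty$.

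I expect the main obstacle to be twofold. Conceptually, the heart of the statement is the algebraic identity that forces the effective regularization of the geometric average to coincide with the ridge parameter $\lambda$; this holds only for the precise tuning $\eta = \frac{\gamma}{1-\gamma\lambda}$, and getting the bookkeeping of the geometric-weight normalization exactly right---including the correct treatment of the $t=0$ term under $w_0 = 0$---is what pins down the scalar $\frac{\gamma}{\eta}$, so this calculation deserves care rather than being dismissed as routine. Technically, since $\HH$ may be infinite-dimensional, every manipulation must be justified in operator norm: that the relevant spectral radii are strictly below $1$ (so all series converge and the limits exist), that the interchange of summation and the passage $n\ra\infty$ are legitimate (e.g. by dominated convergence, using the geometric decay to kill the $\pa{1-\eta\lambda}^{n+1}$ remainder terms uniformly), and that all resolvents act correctly on the range of $\Sigma$. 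None of these analytic steps is deep, but the conditions $\gamma\le \frac{1}{B^2}$ and $\gamma\lambda<1$ must be invoked carefully to guarantee convergence of each geometric series that appears.
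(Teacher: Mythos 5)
Your proposal is correct in spirit but takes a genuinely different (and arguably more transparent) route than the paper. The paper never computes $\tw_\infty$ in closed form: it introduces a geometric random variable $G$ with $\PP{G=k}=\gamma\lambda(1-\gamma\lambda)^k$, observes that the discount weights are the tail probabilities $\PP{G\ge k}$, and rewrites the Neumann series for $\hw_\infty$ as $\frac{\gamma}{\eta}\EEG{\EE{w_G}}$, which it then recognizes as $\frac{\gamma}{\eta}$ times the limit of the geometric average. Your direct double-sum interchange followed by summing the operator geometric series in $q\,(I-\eta\Sigma)$ is the same manipulation stripped of the probabilistic packaging, and it buys you something the paper's version does not: an explicit resolvent formula for $\tw_\infty$ for an \emph{arbitrary} discount $q$, which makes it visible that the tuning $\eta=\frac{\gamma}{1-\gamma\lambda}$ is not merely sufficient but forced. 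Your treatment of the analytic side (spectral radius of $I-\gamma(\Sigma+\lambda I)$ strictly inside the unit disc under $\gamma\le 1/B^2$ and $\gamma\lambda<1$, justifying the interchange via geometric domination) is exactly what is needed in infinite dimensions and is, if anything, more careful than the paper.

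There is one concrete point you must fix when you execute the ``pure algebra'' step: the discount factor has to be $1-\gamma\lambda$, not $1-\eta\lambda$. With a general discount $q$ your computation gives $\tw_\infty=\frac{1}{q}\bigl(\Sigma+\frac{1-q}{q\eta}I\bigr)^{-1}\EE{xy}$, so $\frac{\gamma}{\eta}\tw_\infty=(\Sigma+\lambda I)^{-1}\EE{xy}$ requires both $q=\gamma/\eta=1-\gamma\lambda$ and $\frac{1-q}{q\eta}=\lambda$; the second identity then checks out, but with $q=1-\eta\lambda$ neither does (you would get $\frac{\gamma}{\eta}\tw_\infty=\frac{1-\gamma\lambda}{1-\eta\lambda}\bigl((1-\eta\lambda)\Sigma+\lambda I\bigr)^{-1}\cdot$const$\cdot\EE{xy}$, which is not the ridge resolvent unless $\lambda=0$). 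The paper's displayed definition of $\tw_n$ uses $(1-\eta\lambda)^t$, but its proof of this proposition and all of its subsequent analysis use $(1-\gamma\lambda)^t$; the proposition is true only for the latter convention, so your verification will fail as literally written and succeed once you adopt the $(1-\gamma\lambda)$ weights.
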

\begin{proof}
The analysis crucially relies on defining the geometric random variable $G$ with law $\PP{G=k} = \gamma \lambda \pa{1-\gamma \lambda}^k$ 
for 
all $k=1,2,\dots$ and noticing that $\PP{G\ge k} = \pa{1-\gamma\lambda}^k$. We let $\EEG{\cdot}$ stand for taking expectations with respect 
to the distribution of $G$. 
 First we notice that the limit of $\EE{w_t}$ can be written as
 \[
  \lim_{t\ra\infty} \EE{\hw_t} = \gamma \sum_{t=0}^\infty \pa{I - \gamma \Sigma - \gamma \lambda I}^t \EE{xy}.
 \]
 By our assumption on $\gamma$, we have $\gamma \Sigma \preccurlyeq I$, which implies that then the limit on the right-hand side exists 
and satisfies $\sum_{t=0}^\infty \pa{I - \gamma \Sigma - \gamma \lambda I}^t = \pa{\gamma \Sigma + \gamma \lambda I}^{-1}$. Having 
established the existence of the limit, we rewrite the regularized SGD iterates~\eqref{eq:sgdtik} as
\begin{align*}
  \lim_{t\ra\infty} \EE{\hw_t} &= \gamma \sum_{k=0}^\infty \pa{I - \gamma \Sigma - \gamma \lambda I}^k \EE{xy}
  = \gamma \sum_{k=0}^\infty \pa{I - \eta \Sigma}^k (1-\gamma\lambda)^k \EE{xy}
  \\
  &= \gamma \sum_{k=0}^\infty \pa{I - \eta \Sigma}^k \PP{G\ge k} \EE{xy} 
  = \gamma \EEG{\sum_{k=0}^\infty \pa{I - \eta \Sigma}^k \II{G\ge k} \EE{xy}}
  \\
  &= \gamma \EEG{\sum_{k=0}^G \pa{I - \eta \Sigma}^k  \EE{xy}} = \gamma  \EEG{\frac 1\eta \EE{w_G}}
  \\
  &= \frac{\gamma}{\eta} \sum_{k=0}^\infty \PP{G = k} \EE{w_k} = \frac{\gamma}{\eta} \sum_{k=0}^\infty \gamma \lambda (1-\gamma 
\lambda)^k \EE{w_k}
  \\
  &= \frac{\gamma}{\eta} \lim_{t\ra\infty} \sum_{k=0}^t \gamma \lambda (1-\gamma \lambda)^k \EE{w_k} = \frac{\gamma}{\eta} 
\lim_{t\ra\infty} \pa{1-\pa{1-\gamma\lambda}^t} \EE{\tw_t}
  \\
  &= \frac{\gamma}{\eta} \lim_{t\ra\infty} \EE{\tw_t}.
\end{align*}
This concludes the proof.
\end{proof}
\noindent It is useful to recall that $(\Sigma +\lambda I)^{-1}\EE{xy}$ is the solution of the problem
\[
\min_{w\in\HH }\EE{(y-\iprod{w}{x})^2}+\lambda \norm{w}^2,
\]
that is Tikhonov regularization applied to the expected risk or, in other words, the population version of the ridge regression estimator. 
Then, the above result shows that SGD with geometric average~\eqref{eq:geo} and the regularized iteration~\eqref{eq:sgdtik} both converge 
to this same solution.
Besides this result, we can also show that the expected iterates $\EE{\hw_t}$ and $\EE{\tw_t}$ are also closely connected for finite 
$t$, without any assumption on the learning rates.
\begin{proposition}\label{prop:finite_eq}
 Let $\eta$, $\gamma$ and $\lambda$ be such that $\gamma \lambda < 1$ and $\eta = \frac{\gamma}{1-\gamma\lambda}$. Then, 
 \[\EE{\hw_t} = \frac{\gamma}{\eta}\pa{1 - \pa{1-\gamma \lambda}^t} \cdot \EE{\tw_t}.\]
\end{proposition}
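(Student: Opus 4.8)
The plan is to treat Proposition~\ref{prop:finite_eq} as the finite-horizon counterpart of Proposition~\ref{prop:limit_eq} and to rerun essentially the same argument while keeping every sum finite, so that no convergence---and hence no assumption on the learning rates---is ever needed. The engine is the algebraic factorization
\[
 I - \gamma\Sigma - \gamma\lambda I = \pa{1-\gamma\lambda}\pa{I - \eta\Sigma},
\]
which holds exactly because $\eta = \frac{\gamma}{1-\gamma\lambda}$; it lets me write both $\EE{\hw_t}$ and $\EE{\tw_t}$ in terms of the common building blocks $\pa{I-\eta\Sigma}^k\EE{xy}$ and then compare coefficients.

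First I would expand the regularized iterate using the closed form from Section~\ref{sec:expectation}, obtaining $\EE{\hw_t} = \gamma\sum_{k=0}^t\pa{1-\gamma\lambda}^k\pa{I-\eta\Sigma}^k\EE{xy}$. Next I would expand the geometric average by substituting $\EE{w_k}=\eta\sum_{i=0}^k\pa{I-\eta\Sigma}^i\EE{xy}$ into its definition and interchanging the order of the (finite) double summation. Collapsing the inner geometric sum $\sum_{k=i}^t\pa{1-\gamma\lambda}^k$ brings $\EE{\tw_t}$ into the same form as $\EE{\hw_t}$, namely a weighted combination of the blocks $\pa{I-\eta\Sigma}^i\EE{xy}$. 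Reading off the proportionality constant, and using that the normalizer equals $\sum_{k=0}^t\pa{1-\gamma\lambda}^k = \frac{1-\pa{1-\gamma\lambda}^{t+1}}{\gamma\lambda}$ together with $\frac{\gamma}{\eta}=1-\gamma\lambda$, should produce the claimed scalar factor $\frac{\gamma}{\eta}\pa{1-\pa{1-\gamma\lambda}^t}$. Equivalently, one can phrase this probabilistically as in the previous proof: writing $\pa{1-\gamma\lambda}^k = \PP{G\ge k}$ and pushing $\EEG{\cdot}$ through the finite sum identifies the weighted average directly with $\tw_t$.

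The main obstacle is precisely the finite truncation. Interchanging the summations replaces $\sum_{k=i}^\infty$ by $\sum_{k=i}^t$, which contributes not only the desired $\pa{1-\gamma\lambda}^i$ term but also a residual tail proportional to $\pa{1-\gamma\lambda}^{t+1}$; in the probabilistic picture this is exactly the gap between $\EE{w_G}$ and $\EE{w_{\min\{G,t\}}}$. In Proposition~\ref{prop:limit_eq} such tails vanish as $t\ra\infty$, but here they survive and must be tracked exactly and reconciled with the normalization, so that they assemble into a single clean factor multiplying $\EE{\tw_t}$. Carrying this bookkeeping carefully is the crux of the argument; should the direct summation interchange become unwieldy, a clean fallback is induction on $t$, using the one-step recursion $\EE{\hw_t}-\EE{\hw_{t-1}} = \frac{\gamma}{\eta}\pa{1-\gamma\lambda}^t\pa{\EE{w_t}-\EE{w_{t-1}}}$ for the building blocks together with the updates of the running numerator and normalizer of $\tw_t$.
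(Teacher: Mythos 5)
Your strategy coincides with the paper's: expand $\EE{\hw_t}$ and $\EE{\tw_t}$ in the common blocks $\pa{I-\eta\Sigma}^k\EE{xy}$ via $I-\gamma\Sigma-\gamma\lambda I=\pa{1-\gamma\lambda}\pa{I-\eta\Sigma}$ and match coefficients; the paper packages the same finite computation as an expectation over a truncated geometric variable $G$. You have also located the real difficulty exactly where it is, namely the truncation residual. The gap is your final claim that this residual ``assembles into a single clean factor multiplying $\EE{\tw_t}$'': it does not, and if you carry out the bookkeeping you will find the stated identity fails for finite $t$. Interchanging the finite double sum turns the inner sum into $\sum_{k=i}^{t}\pa{1-\gamma\lambda}^k=\frac{\pa{1-\gamma\lambda}^{i}-\pa{1-\gamma\lambda}^{t+1}}{\gamma\lambda}$, and the $\pa{1-\gamma\lambda}^{t+1}$ part recombines into a multiple of the \emph{last iterate} $\EE{w_t}=\eta\sum_i\pa{I-\eta\Sigma}^i\EE{xy}$, which is not proportional to the average $\EE{\tw_t}$. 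The exact relation one actually obtains (using the closed forms implied by $w_0=\hw_0=0$) is
\[
\EE{\hw_t}=\Bpa{1-\pa{1-\gamma\lambda}^{t+1}}\EE{\tw_t}+\pa{1-\gamma\lambda}^{t+1}\,\EE{w_t},
\]
up to the paper's off-by-one conventions; the extra term vanishes only as $t\ra\infty$, which is Proposition~\ref{prop:limit_eq}. A scalar sanity check at $t=1$ with $\Sigma=0$ already refutes the displayed identity: $\EE{\hw_1}=\gamma\,\EE{xy}$, while $\frac{\gamma}{\eta}\pa{1-\pa{1-\gamma\lambda}}\EE{\tw_1}=\frac{\gamma^2\lambda\pa{1-\gamma\lambda}}{2-\gamma\lambda}\EE{xy}$, and no rescaling of exponents reconciles the two.

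You should also be aware that the paper's own proof conceals exactly this point: it asserts $\PP{G\ge k}=b_t^{-1}\pa{1-\gamma\lambda}^k$ for the \emph{truncated} geometric, which is not a valid tail probability (it exceeds $1$ at $k=0$); the correct tail is $\frac{\pa{1-\gamma\lambda}^{k}-\pa{1-\gamma\lambda}^{t+1}}{1-\pa{1-\gamma\lambda}^{t+1}}$, whose second term is precisely the residual you flagged---the gap between $\EE{w_G}$ and $\EE{w_{\min\ev{G,t}}}$ in your probabilistic picture. So do not try to force the residual into a factor of $\EE{\tw_t}$: either prove the corrected identity above (your induction fallback produces it directly), or accept that the clean proportionality holds only in the limit.
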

This proposition is proved using the same ideas as Proposition~\ref{prop:limit_eq}; we include the proof in Appendix~\ref{app:exp} for 
completeness.

\section{Main result: Finite-time performance guarantees}\label{sec:main} 
While the previous section establishes a strong connection between the geometrically weighted SGD iterates with the iterates of regularized 
SGD on expectation, this connection is clearly not enough for the known performance guarantees to carry over to our algorithm. 
Specifically, 
the 
two iterative schemes propagate noise differently, thus the covariance of the resulting iterate sequences may be very different from each 
other. In this section, we prove our main result that shows that the prediction error of our algorithm also behaves similarly to that of 
SGD with Tikhonov regularization. For our analysis, we will make the same assumptions as \citet{DFB16} and we will borrow several 
ideas from them, as well as most of their notation.

We now state the assumptions that we require for proving our main result. We first state an assumption on the fourth moment of the 
covariates.
\begin{assumption}\label{ass:cov}
 There exists $R>0$ such that
 \[
  \EE{\norm{x}^2 x \otimes x} \preccurlyeq R^2\Sigma.
 \]
\end{assumption}
This assumption implies that $\trace{\Sigma} \le R^2$, and is satisfied, for example, when the covariates satisfy 
$\norm{x}\le R$ almost surely. We always assume a minimizer $w_*$ of the expected risk  to exist and  also make an assumption on the 
residual $\varepsilon$ defined as the random variable
\[
 \varepsilon = y - \iprod{w^*}{x}.
\]
It is easy to show that $\EE{\varepsilon x} = 0$, even though $\EEcc{\varepsilon}{x}=0$ does not hold in general. We make the following 
assumption on the residual:
\begin{assumption}\label{ass:res}
 There exists $\sigma^2>0$ such that
 \[
  \EE{\norm{\varepsilon}^2 x \otimes x} \preccurlyeq \sigma^2 \Sigma.
 \]
\end{assumption}
This assumption is satisfied when $\norm{x}$ and $y$ are almost surely bounded, or when the model is well-specified and corrupted with 
bounded noise (i.e., when $\varepsilon$ is independent of $x$ and has variance bounded by $\sigma^2$).
Under the above assumptions, we prove the following bound on the prediction error---our main result:
\begin{theorem}\label{thm:main}
Suppose that Assumptions~\ref{ass:cov} and~\ref{ass:res} hold and  assume that $\eta\le \frac{1}{2R^2}$ and $\lambda \in [0,1/\eta)$. Then, 
the iterates computed by the recursion given by Equations~\eqref{eq:SGD} 
and~\eqref{eq:geo} satisfy
\begin{align*}
\EE{\Delta(\tw_n)}
 \le & \frac{4}{1-\gamma\lambda} \pa{\frac{\gamma\lambda}{\pa{2-\gamma\lambda}} +
\frac{2}{\pa{2-\gamma\lambda}\pa{n+1}}} \frac{\sigma^2\trace{\Sigma^2 \pa{\Sigma + 
\lambda I}^{-2}}}{2 - \eta R^2}
\\
&+2\pa{\lambda + \frac{1}{\gamma\pa{n+1}}}^2  \norm{ \Sigma^{1/2} \pa{\Sigma + \textstyle{\frac \lambda 2}I}^{-1} \pa{w_0 - w^*}}^2
\\
&  + \pa{\lambda + \frac{1}{\gamma\pa{n+1}}}^2 \trace{\Sigma \pa{\Sigma + \lambda I}^{-1}}\norm{\pa{\Sigma + \textstyle{\frac{\lambda}{2}}
I}^{-1/2}\pa{w_0 - w^*}}^2 
\end{align*} 
\end{theorem}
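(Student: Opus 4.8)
The plan is to follow the bias--variance decomposition strategy of \citet{DFB16}, adapted to the geometric weights. Introduce the centered iterates $\theta_t = w_t - w^*$, which satisfy the recursion $\theta_t = \pa{I - \eta\, x_t\otimes x_t}\theta_{t-1} + \eta\,\varepsilon_t x_t$, where $\varepsilon_t = y_t - \iprod{w^*}{x_t}$ is the mean-zero residual (recall $\EE{\varepsilon x}=0$). By linearity I would split $\theta_t = \theta_t^{\mathrm{b}} + \theta_t^{\mathrm{v}}$ into a bias part driven by the initial condition $\theta_0 = w_0 - w^*$ with the noise switched off, and a variance part driven by the noise with $\theta_0^{\mathrm{v}}=0$. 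The geometric average inherits the same split, $\wt\theta_n = \wt\theta_n^{\mathrm{b}} + \wt\theta_n^{\mathrm{v}}$, so that $\EE{\Delta(\tw_n)} = \EE{\norm{\Sigma^{1/2}\wt\theta_n}^2} \le 2\,\EE{\norm{\Sigma^{1/2}\wt\theta_n^{\mathrm{b}}}^2} + 2\,\EE{\norm{\Sigma^{1/2}\wt\theta_n^{\mathrm{v}}}^2}$, which already matches the two groups of terms in the statement (the factor $2$ producing the leading constants).

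For the variance term, I would first pass to the semi-stochastic recursion in which $x_t\otimes x_t$ is replaced by $\Sigma$, controlling the discrepancy through Assumption~\ref{ass:cov}: the fourth-moment bound $\EE{\norm{x}^2 x\otimes x}\preccurlyeq R^2\Sigma$ is exactly what bounds the extra variance created by the multiplicative fluctuation $x_t\otimes x_t - \Sigma$, and it produces the factor $\frac{1}{2-\eta R^2}$ together with the restriction $\eta\le\frac{1}{2R^2}$. Unrolling the semi-stochastic noise recursion gives $\theta_t^{\mathrm{v}} = \eta\sum_{s\le t}\pa{I-\eta\Sigma}^{t-s}\varepsilon_s x_s$; inserting this into the geometric average and exchanging the order of summation yields $\wt\theta_n^{\mathrm{v}} = \frac{\eta}{S_n}\sum_s A_s\,\varepsilon_s x_s$ with $A_s = \sum_{t\ge s}\pa{1-\gamma\lambda}^t\pa{I-\eta\Sigma}^{t-s}$ and $S_n = \sum_{t=0}^n\pa{1-\gamma\lambda}^t$. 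The crux is the geometric operator series, which under the relation $\eta\pa{1-\gamma\lambda}=\gamma$ satisfies the clean identity $I - \pa{1-\gamma\lambda}\pa{I-\eta\Sigma} = \gamma\pa{\Sigma+\lambda I}$, so that $\sum_{u\ge 0}\bigl[\pa{1-\gamma\lambda}\pa{I-\eta\Sigma}\bigr]^u = \frac{1}{\gamma}\pa{\Sigma+\lambda I}^{-1}$ --- this is precisely where the Tikhonov resolvent enters, consistently with the geometric-variable device of Proposition~\ref{prop:limit_eq}. Using independence and mean-zero of the $\varepsilon_s x_s$ together with Assumption~\ref{ass:res}, the cross terms vanish and one is left with $\frac{\eta^2\sigma^2}{S_n^2}\sum_s\trace{\Sigma A_s\Sigma A_s^*}$; evaluating the two geometric sums $\sum_s\pa{1-\gamma\lambda}^{2s}$ and $S_n$ and collecting constants yields exactly the prefactor $\frac{\gamma\lambda}{2-\gamma\lambda}+\frac{2}{\pa{2-\gamma\lambda}\pa{n+1}}$ multiplying $\sigma^2\trace{\Sigma^2\pa{\Sigma+\lambda I}^{-2}}$.

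For the bias term, the semi-stochastic noiseless iterate is $\theta_t^{\mathrm{b}} = \pa{I-\eta\Sigma}^t\theta_0$, so its geometric average is $\wt\theta_n^{\mathrm{b}} = \frac{1}{S_n}\sum_{t=0}^n\bigl[\pa{1-\gamma\lambda}\pa{I-\eta\Sigma}\bigr]^t\theta_0$. Summing the finite geometric operator series and applying the same identity $I-\pa{1-\gamma\lambda}\pa{I-\eta\Sigma}=\gamma\pa{\Sigma+\lambda I}$ gives $\wt\theta_n^{\mathrm{b}} = \frac{1}{\gamma S_n}\pa{\Sigma+\lambda I}^{-1}\pa{I - M^{n+1}}\theta_0$ with $M = \pa{1-\gamma\lambda}\pa{I-\eta\Sigma}$, where $\frac{1}{\gamma S_n}$ is bounded by $\lambda + \frac{1}{\gamma\pa{n+1}}$. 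Bounding $\norm{\Sigma^{1/2}\wt\theta_n^{\mathrm{b}}}^2$ using the spectral decay of the resolvent and of $M^{n+1}$ then produces the second term, with the shift from $\lambda$ to $\frac\lambda2$ in the resolvent arising from absorbing the $\pa{1-\gamma\lambda}$ factor in $M$ when trading off the two competing spectral powers. Finally, passing from the semi-stochastic to the genuine recursion leaves a multiplicative-noise correction, again controlled through Assumption~\ref{ass:cov}, which is exactly the third term and carries the extra factor $\trace{\Sigma\pa{\Sigma+\lambda I}^{-1}}$.

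The main obstacle is the variance analysis under genuinely multiplicative noise. The semi-stochastic computation above is linear and transparent, but in the true recursion the operators $I-\eta\, x_t\otimes x_t$ are random and correlated across time, so the covariance of the weighted average does not factor. The remedy is to derive an operator recursion for the iterate covariance $\EE{\theta_t\otimes\theta_t}$ and its geometrically weighted analogue, bounding the contribution of $x_t\otimes x_t-\Sigma$ via Assumption~\ref{ass:cov}; this is the origin of both the constant $\frac{1}{2-\eta R^2}$ and the step-size restriction. Verifying that the geometric weights interact with this covariance recursion so as to yield the $\frac{\gamma\lambda}{2-\gamma\lambda}$ factor, rather than merely the $\frac{1}{n}$ factor of uniform averaging, is the delicate part; everything else reduces to summing geometric series and elementary spectral bounds. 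Combining the variance and bias estimates then gives the stated inequality.
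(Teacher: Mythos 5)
Your skeleton is right---the bias/variance split, the algebraic identity $\eta\pa{1-\gamma\lambda}=\gamma$ turning the geometric weights into the resolvent $\pa{\Sigma+\lambda I}^{-1}$, and the correct attribution of where Assumptions~\ref{ass:cov} and~\ref{ass:res} and each of the three terms come from---but the core of the variance analysis is missing, and the route you propose for it would not go through as stated. You plan to analyze the semi-stochastic recursion (with $x_t\otimes x_t$ replaced by $\Sigma$) and then ``pass back'' paying a factor $\frac{1}{2-\eta R^2}$ controlled by Assumption~\ref{ass:cov}. That is not how the multiplicative noise is tamed here: the expectation $\EE{M_{j,t}\,\Sigma\pa{\Sigma+\lambda I}^{-1}M_{j,t}^*}$ of the random product $M_{j,t}=\prod_{k=j+1}^{t}\pa{I-\eta\,x_k\otimes x_k}$ is \emph{not} comparable term-by-term to its semi-stochastic counterpart up to a constant. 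What actually works (Lemma~\ref{lem:recurrence} in the paper) is a one-step recurrence showing that $\eta\pa{2-\eta R^2}\EE{M_{k+1,t}\Sigma M_{k+1,t}^*}$ is dominated by the \emph{difference} of two consecutive terms $\EE{M_{\cdot,t}\Sigma\pa{\Sigma+\lambda I}^{-1}M_{\cdot,t}^*}$, so that the sum over noise-injection times $j$ telescopes to $\frac{1}{\eta\pa{2-\eta R^2}}\trace{\Sigma^2\pa{\Sigma+\lambda I}^{-2}}$. The specific ``Lyapunov'' operator $\Sigma\pa{\Sigma+\lambda I}^{-1}$ and the summation-then-telescoping structure are the missing idea; your covariance-recursion remark points in the right direction but defers exactly this step.

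A second, related gap: you never explain how the cross terms $\iprod{w_t-w^*}{\Sigma\pa{w_k-w^*}}$ for $t\ne k$ in the expansion of $\norm{\Sigma^{1/2}\pa{\tw_n-w^*}}^2$ are handled for the \emph{true} recursion. In the paper this is Lemma~\ref{lem:riskbound1}: conditioning gives $\EEcct{M_{t,k}}{}=\pa{I-\eta\Sigma}^{k-t}$, and summing the geometric weights over $k>t$ produces $\frac{1}{\gamma}\pa{\Sigma+\lambda I}^{-1}\pa{I-\gamma\Sigma-\gamma\lambda I}$, which is what converts the average into a weighted sum of \emph{diagonal} terms in the modified norm $\norm{\Sigma^{1/2}\pa{\Sigma+\lambda I}^{1/2}\pa{w_t-w^*}}^2$ (and is also where the extra $\frac{1}{1-\gamma\lambda}$ in the first term of the bound originates). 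Your exchange-of-summation computation does the analogous bookkeeping only for the idealized linear recursion. Finally, for the bias, the third term with $\trace{\Sigma\pa{\Sigma+\lambda I}^{-1}}$ and the appearance of $\frac{\lambda}{2}$ (rather than $\lambda$) in the resolvent both come from solving the stationary operator equation $\Sigma W+W\Sigma+\lambda W-\eta\Sw W=\Sigma\pa{\Sigma+\lambda I}^{-1}$, where $\UU_L+\UU_R+\lambda\Iw$ acting on commuting operators equals $2\pa{\Sigma+\frac{\lambda}{2}I}$; your explanation of the $\frac{\lambda}{2}$ as ``absorbing the $\pa{1-\gamma\lambda}$ factor'' is not the actual mechanism. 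In short: correct architecture, honest self-diagnosis, but the two lemmas that carry the whole multiplicative-noise argument are absent and the proposed substitute would fail.
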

The (rather technical) proof of the theorem closely follows the proof of Theorem~2 of \citet{DFB16}. We describe the main components of the 
proof of our main result in Section~\ref{sec:analysis}.  For didactic purposes, we also present a simplified version of our analysis where 
we assume full knowledge of $\Sigma$ in Appendix~\ref{app:additive}.
\subsection{Discussion}\label{sec:discussion}
We  next  discuss various aspects and implications of our results. 
\paragraph{Comparison with \citet{DFB16}.} Apart from constant factors\footnote{By enforcing $\gamma\lambda \le 1/2$, the $1-\gamma\lambda$ 
and $2-\gamma\lambda$ terms in the denominator can be lower bounded by a constant.}, our bound above precisely matches that of Theorem~1 of 
\citet{DFB16}, except for an additional term of order $\gamma\lambda \sigma^2 \trace{\Sigma^2 \pa{\Sigma}^{-2}}$. This term, however, is 
\emph{not} a mere artifact of our proof: in fact it captures a distinctive noise-propagating effect of geometric PR averaging scheme. 
Indeed,  the regularization effect of our iterate averaging scheme is different from that of Tikhonov-regularized SGD in one significant 
way: while Tikhonov regularization increases the 
bias and strictly decreases the variance, our scheme may actually increase the variance for certain choices of $\lambda$. To see this, 
observe that setting a large $\lambda$ puts a large weight on the initial iterates, so that the initial noise is amplified compared to 
noise 
in the later stages, and the concentration of the total noise becomes worse. We note however that this extra term does not qualify 
as a serious limitation, since the commonly recommended setting $\lambda = \OO\bpa{\frac{1}{\eta n}}$ still preserves the optimal rates 
for both the bias and the variance up to constant factors.

\paragraph{Optimal excess risk bounds.} 
The bound in Theorem~\ref{thm:main} is essentially the same as the one derived in  \citet{DFB16}[Theorem 2].
Following their same reasoning,   the  bound  can be optimized with respect to $\lambda, \gamma$ to derive the best parameters choice and 
explicit upper bounds on the corresponding excess risk. In the finite dimensional case,  it is easy  to derive a bound of order ${\cal O} 
(d/n)$, which is known to be optimal in a minmax sense \citep{T08}. In the infinite dimensional case,  optimal  minmax bound  can again  be 
easily derived,  and also refined  under further  assumptions on $w_*$ and the covariance $\Sigma$ \citep{DCR05,CD07}. We omit this 
derivation. 

\paragraph{When should we set $\lambda > 0$?} We have two answers depending on the dimensionality of the underlying Hilbert space $\HH$. 
For infinite dimensional spaces, it is clearly necessary to set $\lambda > 0$.
In the finite-dimensional case, the advantage of our regularization scheme is less clear at first sight: while Tikhonov regularization 
strictly decreases the variance, this is not necessarily true for our scheme (as discussed above). A closer look reveals that, 
under some (rather interpretable) conditions, we can reduce the variance, as quantified by the following proposition.
\begin{proposition}\label{prop:varred}
 If $\trace{\Sigma^{-1}} > \frac 12 \gamma d n$ there exists a regularization parameter $\lambda^*>0$ satisfying
 \[
  \pa{\frac{\gamma \lambda^*}{2} + \frac{1}{n}}\trace{\Sigma^2 \pa{\Sigma + \lambda^*I}^{-2}} < \frac{d}{n}.
 \]
\end{proposition}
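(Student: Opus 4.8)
The plan is to regard the left-hand side as a function of the regularization parameter and show that it strictly decreases as $\lambda$ leaves the origin. Define
\[
 f(\lambda) = \pa{\frac{\gamma \lambda}{2} + \frac{1}{n}}\trace{\Sigma^2 \pa{\Sigma + \lambda I}^{-2}}.
\]
First I would evaluate $f$ at $\lambda = 0$: since $\Sigma^2 \Sigma^{-2} = I$ in $d$ dimensions, we have $f(0) = \frac{1}{n}\trace{I} = \frac{d}{n}$. Hence the desired inequality is precisely $f(\lambda^*) < f(0)$, and it suffices to produce some $\lambda^* > 0$ with $f(\lambda^*) < f(0)$. The natural route is to show that the derivative of $f$ at the origin is strictly negative.

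The key step is therefore the computation of $f'(0)$. Writing $g(\lambda) = \trace{\Sigma^2 \pa{\Sigma + \lambda I}^{-2}}$, I would use the eigendecomposition of $\Sigma$ (valid here since we are in the finite-dimensional, invertible regime forced by $\trace{\Sigma^{-1}} < \infty$): with eigenvalues $\mu_1,\dots,\mu_d > 0$ we get $g(\lambda) = \sum_{i=1}^d \mu_i^2/(\mu_i + \lambda)^2$, so that $g(0) = d$ and $g'(0) = -2\sum_{i=1}^d 1/\mu_i = -2\trace{\Sigma^{-1}}$. (Equivalently, at the operator level one uses $\frac{d}{d\lambda}\pa{\Sigma + \lambda I}^{-1} = -\pa{\Sigma + \lambda I}^{-2}$ together with commutativity of $\Sigma$ with its resolvent to obtain $g'(\lambda) = -2\trace{\Sigma^2\pa{\Sigma + \lambda I}^{-3}}$, whence $g'(0) = -2\trace{\Sigma^{-1}}$.) The product rule then gives
\[
 f'(0) = \frac{\gamma}{2}\cdot g(0) + \frac{1}{n}\cdot g'(0) = \frac{\gamma d}{2} - \frac{2\trace{\Sigma^{-1}}}{n}.
\]

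Finally, the hypothesis $\trace{\Sigma^{-1}} > \frac{1}{2}\gamma d n$ yields $\frac{2\trace{\Sigma^{-1}}}{n} > \gamma d > \frac{\gamma d}{2}$, so $f'(0) < 0$; in fact the weaker bound $\trace{\Sigma^{-1}} > \frac{\gamma d n}{4}$ already forces negativity, so the stated condition leaves comfortable slack. Since $f$ is continuous and differentiable near $0$ with $f'(0) < 0$, there exists $\lambda^* > 0$ small enough that $f(\lambda^*) < f(0) = \frac{d}{n}$, which is exactly the claimed inequality. The only nontrivial ingredient is the derivative of the trace term, and I expect that to be the main (though still routine) obstacle; everything else is elementary continuity and arithmetic. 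I would present the eigenvalue computation explicitly, as it makes the sign of $g'(0)$ transparent and doubles as a sanity check on the operator-calculus identity.
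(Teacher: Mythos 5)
Your proof is correct and takes essentially the same route as the paper's: both differentiate $f(\lambda) = \pa{\frac{\gamma\lambda}{2}+\frac 1n}\trace{\Sigma^2\pa{\Sigma+\lambda I}^{-2}}$ at the origin via the eigenvalues of $\Sigma$, obtain $f'(0) = \frac{\gamma d}{2} - \frac{2\trace{\Sigma^{-1}}}{n}$, and conclude from $f(0)=\frac dn$. You in fact state the sign condition correctly as $f'(0)<0$ (the paper's text has a typo writing $f'(0)>0$) and rightly note that the weaker hypothesis $\trace{\Sigma^{-1}} > \frac{\gamma d n}{4}$ already suffices.
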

\begin{proof}
 Letting $s_1,s_2,\dots,s_d$ be the eigenvalues of $\Sigma$ sorted in decreasing order, we have
 \begin{align*}
  f(\lambda) = \pa{\frac{\gamma \lambda^*}{2} + \frac{1}{n}}\trace{\Sigma^2 \pa{\Sigma + \lambda^*I}^{-2}} &= 
  \pa{\frac{\gamma \lambda^*}{2} + \frac{1}{n}} \sum_{i=1}^d \frac{s_i^2}{\pa{s_i+\lambda}^2}.
 \end{align*}
Taking derivative of $f$ with respect to $\lambda$ gives
\[
  f'(\lambda) = \frac{\gamma}{2} \sum_{i=1}^d \frac{s_i^2}{\pa{s_i+\lambda}^2} - 2 \pa{\frac{\gamma \lambda^*}{2} + \frac{1}{n}} 
\sum_{i=1}^d 
\frac{s_i^2}{\pa{s_i+\lambda}^3}.
\]
In particular, we have
\[
 f'(0) = \frac{\gamma d}{2} - \frac{2}{n}\sum_{i=1}^d \frac{1}{s_i} = \frac{\gamma d}{2} - \frac{2\trace{\Sigma^{-1}}}{n},
\]
so $f'(0)>0$ holds whenever $\trace{\Sigma^{-1}} > \frac 14 \gamma d n$.
The proof is concluded by observing that $f'(0)>0$ implies the existence of a $\lambda^*$ with the claimed property.
\end{proof}
Intuitively, Proposition~\ref{prop:varred} suggests that the geometric PR averaging can definitely reduce the variance over standard 
PR averaging whenever the covariance matrix is poorly conditioned and/or the sample size is small. Notice however that 
the above argument only shows one example of a good choice of $\lambda$; many other good choices may exist, but these are harder to 
characterize.
\paragraph{Geometric averaging vs.~tail averaging.}
It is interesting to contrast our approach with the \emph{tail averaging} scheme studied by \citet{JKKNS16,JKKNPS17}: instead of putting 
large weight on the initial iterates as our method does, \citeauthor{JKKNPS17} suggest to average the \emph{last} $n-\tau$ iterates of SGD 
for some $\tau$. 
The effect of this operation is that the $\norm{\Sigma^{-1/2}\pa{w^*-w_0}}^2n^{-2}$ term arising from Polyak--Ruppert averaging is 
replaced by a term of order $\exp(-\eta \mu \tau)\norm{w^*-w_0}^2$, where $\mu>0$ is the smallest eigenvalue of $\Sigma$. Clearly, this 
yields a significant asymptotic speedup, but gives no advantage when $\gamma n \le \mu^{-1}$ (noting that $\tau<n$). Contrasting this 
condition with our Proposition~\ref{prop:varred} leads to an interesting conclusion: for small values of $n$, geometric averaging has an 
edge over tail averaging and vice versa.

\paragraph{What is the computational advantage?}
The main practical  advantage of our averaging scheme over Tikhonov regularization is a computational one: validating regularization 
parameters  becomes trivially easy to parallelize. Indeed, one can perform a \emph{single} pass of unregularized SGD over the data, store 
the iterates 
and average them with various schedules to evaluate different choices of $\lambda$. Through parallelization, this 
approach can achieve huge computational speedups over running regularized SGD from scratch. To see this, observe that the averaging 
operation is \emph{entirely non-serial}: one can cut the (stored) SGD iterates into $K$ contiguous batches and let each individual worker 
perform a geometrically discounted averaging with the same discount factor $(1-\gamma \lambda)$. The resulting averages are then combined 
by 
the master with appropriate weights. In contrast, regularized SGD is entirely serial, so validation cannot be parallelized.

\paragraph{Choosing the right averaging.} Finally, we note that the same method as above can be used to choose the correct parameter $\tau$ 
for tail averaging. This suggests a simple and highly parallelizable scheme for choosing the right averaging, with the identity of 
the best scheme depending on the interaction between $n$ and $\Sigma$ as discussed above.

\paragraph{Connections to early stopping.} A close inspection of the proofs of our Propositions~\ref{prop:limit_eq} 
and~\ref{prop:finite_eq} reveals an interesting perspective on geometric iterate averaging: Thinking of the averaging operation as 
computing a probabilistic expectation, one can interpret the geometric average as a \emph{probabilistic early stopping} method where the 
stopping time is geometrically distributed. Early stopping is a very well-studied regularization method for multipass stochastic gradient 
learning algorithms that has been observed and formally proved to have effects similar to Tikhonov regularization \citep{YRC07, RV15}. So 
far, all\footnote{With the notable exception of \citet{Fle90}, who shows an exact relation between the ridge-regression solution 
and a rather complicated early stopping rule involving a preconditioning step that requires computing the eigendecomposition of $\Sigma$.} 
published results that we are aware of merely point out the qualitative similarities between the performance bounds obtained for these two 
regularization methods, showing that using the stopping time should be chosen as $t^* = 1/\gamma \lambda$. In contrast, our 
Propositions~\ref{prop:limit_eq} and~\ref{prop:finite_eq} show a much deeper connection: geometric random stopping with expected stopping 
time $\EE{G} = 1/\gamma\lambda$ not only recovers the performance bounds, but \emph{exactly} recovers the ridge-regression solution.

\paragraph{Open questions.} It is natural to ask whether geometric iterate averaging has similar regularization effects in other stochastic 
approximation settings too. An possible direction for future work is studying the effects of our averaging scheme on accelerated and/or 
variance-reduced variants of SGD.
 Another promising direction is studying general linear stochastic approximation 
schemes \citep{LS18}, and particularly Temporal-Difference learning algorithms for Reinforcement Learning that have so far resisted all 
attempts to regularize them \citep{SB98,S10,F11}.

\section{The proof of Theorem~\ref{thm:main}}\label{sec:analysis}
Our proof closely follows that of \citet[Theorem~1]{DFB16}, with the key differences that 
\begin{itemize}
 \item we do not have to deal with an explicit ``regularization-based'' error term that gives rise to a term proportional to $\lambda^2$ 
in their bound, and
 \item the $\frac{1}{n}$ factors for iterate averaging are replaced by $c_n (1-\gamma\lambda)^t$ for each round, where 
 \[
c_n = \frac{1}{\sum_{t=1}^n (1-\gamma\lambda)^t} = \frac{\gamma\lambda}{1 - \pa{1-\gamma\lambda}^n}.
 \]
 As we will see, this change will propagate through the analysis and will eventually replace the $\frac {1}{\gamma^2 n}$
 and $\pa{2\lambda + \frac{1}{\eta n}}$ factors in the final bound by $c_n^2 \sum_{t=1}^n \pa{1-\gamma\lambda}^{2t}$ and 
$\frac{c_n^2}{\gamma^2}$, respectively.
\end{itemize}
In the interest of space, we only provide an outline of the proof here and defer the proofs of the key lemmas to Appendix~\ref{app:proofs}. 
Throughout the proof, we will suppose that the conditions of Theorem~\ref{thm:main} hold.
The lemma below shows that the factors involving $c_n^2$ are of the order claimed in the Theorem.
\begin{lemma}\label{lem:cnbound}
 For any $n\ge 1$, we have
 \[
  c_n^2 \le \pa{\gamma\lambda + \frac{1}{n+1}}^2
 \]
 and 
 \[
  c_n^2 \sum_{t=1}^n \pa{1-\gamma\lambda}^{2t} \le \frac{\gamma\lambda}{\pa{2-\gamma\lambda}} +
\frac{2}{\pa{2-\gamma\lambda}\pa{n+1}}.
 \]
\end{lemma}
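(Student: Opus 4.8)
The plan is to reduce both inequalities to a single elementary power inequality in $\gamma\lambda$, and then to settle that inequality by a one-line monotonicity argument. Throughout I abbreviate $\mu = \gamma\lambda$, noting that the hypotheses force $\mu \in [0,1)$, and I set $q = 1-\mu$. Summing the geometric series in the definition of $c_n$ over the index set $\ev{0,1,\dots,n}$ gives the closed form $c_n = \frac{\mu}{1-q^{n+1}}$ (with the value at $\mu=0$ read off as the removable limit $\frac{1}{n+1}$), using $\sum_{t=0}^n q^t = \frac{1-q^{n+1}}{1-q}$ and $1-q=\mu$. Since every quantity entering the two displayed bounds is nonnegative, I may take square roots in the first bound and clear the positive denominators in both, turning each claim into a polynomial inequality in $\mu$.

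I would first treat the bound on $c_n^2$. Taking square roots, it suffices to prove $c_n \le \mu + \frac{1}{n+1}$; inserting the closed form and multiplying through by the positive factor $(n+1)\pa{1-q^{n+1}}$ reduces this, after cancellation, exactly to
\begin{equation}
\pa{1-\mu}^{n+1}\pa{1 + (n+1)\mu} \le 1. \tag{$\star$}
\end{equation}
For the second bound I would first discard the nonnegative $t=0$ term, so that $\sum_{t=1}^n q^{2t} \le \sum_{t=0}^n q^{2t} = \frac{1-q^{2(n+1)}}{1-q^2}$, and it is enough to bound $c_n^2$ times this larger sum. Using the factorizations $1-q^{2(n+1)} = \pa{1-q^{n+1}}\pa{1+q^{n+1}}$ and $1-q^2 = \mu(2-\mu)$, the product collapses to $\frac{\mu\pa{1+q^{n+1}}}{(2-\mu)\pa{1-q^{n+1}}}$. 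Comparing this with the claimed right-hand side $\frac{\mu}{2-\mu}+\frac{2}{(2-\mu)(n+1)}$, multiplying by $2-\mu$ and writing $\frac{1+q^{n+1}}{1-q^{n+1}} = 1 + \frac{2q^{n+1}}{1-q^{n+1}}$, the $\mu$ terms cancel and what remains is again precisely $(\star)$. Thus both halves of the lemma rest on the single inequality $(\star)$.

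It then remains to establish $(\star)$. I would set $h(\mu) = \pa{1-\mu}^{n+1}\pa{1+(n+1)\mu}$ and differentiate; a short computation gives $h'(\mu) = -(n+1)(n+2)\,\mu\,\pa{1-\mu}^{n} \le 0$ on $[0,1)$, so $h$ is non-increasing there, and since $h(0)=1$ we conclude $h(\mu)\le 1$, which is exactly $(\star)$. (Equivalently $(\star)$ is a Bernoulli/weighted-AM--GM type estimate, but the monotonicity argument is the cleanest.)

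The genuinely fiddly part is not the closing one-line inequality but the middle step: one must verify that two bounds of rather different appearance both collapse onto the same inequality $(\star)$, which requires carefully matching the geometric-sum closed forms and checking the cancellation of the $\frac{\mu}{2-\mu}$ contribution. The only additional point deserving a word is the degenerate case $\mu=0$ (that is $\lambda=0$), where $c_n=\frac{1}{n+1}$ and both inequalities hold with equality; this is handled directly or by continuity of the closed forms as $\mu \downarrow 0$.
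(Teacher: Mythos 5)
Your proof is correct. It follows the paper's algebra up to the closed forms---the same geometric-series evaluation of $c_n=\gamma\lambda/(1-(1-\gamma\lambda)^{n+1})$ and the same reduction of the second quantity to $\frac{\gamma\lambda}{2-\gamma\lambda}\cdot\frac{1+(1-\gamma\lambda)^{n+1}}{1-(1-\gamma\lambda)^{n+1}}$---but then departs in how the final elementary estimate is carried out. The paper passes to exponentials via $1-x\le e^{-x}$ and invokes two standard analytic bounds, $\frac{1}{1-e^{-x}}\le 1+\frac1x$ for the first statement and $\frac{e^{-x}}{1-e^{-x}}\le\frac1x$ for the second. You instead observe that both statements collapse, after clearing positive denominators, to the single polynomial inequality
\[
\pa{1-\gamma\lambda}^{n+1}\bigl(1+(n+1)\gamma\lambda\bigr)\le 1,
\]
which you settle by monotonicity (your derivative computation $h'(\mu)=-(n+1)(n+2)\mu(1-\mu)^n$ is correct). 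This is a genuinely tidier route: it avoids the intermediate loosening through $e^{-x}$, it exposes that the two bounds share a common core, and it treats the $\lambda=0$ endpoint transparently. You are also slightly more careful than the paper on one step, explicitly adding the nonnegative $t=0$ term to $\sum_{t=1}^n(1-\gamma\lambda)^{2t}$ before summing the series, where the paper writes that step as an equality. The only blemish is the closing remark that both inequalities are equalities at $\gamma\lambda=0$: the first is, but the second as stated is strict since $n/(n+1)^2<1/(n+1)$; this is immaterial to the lemma.
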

The straightforward proof is given in Appendix~\ref{app:cnbound}.

Now we are ready to lay out the proof of Theorem~\ref{thm:main}.
Let us start by introducing the notation
\[
 M_{i,j} = \pa{\prod_{k=i+1}^j \pa{I - \eta x_k \otimes x_k}}.
\]
and recalling the definition $\varepsilon_t = y_t - \iprod{x_t}{w^*}$. 
A simple recursive argument shows that
\begin{equation}\label{eq:w_decomp}
 \begin{split}
 w_t - w^* &= w_{t-1} - \eta \pa{ \iprod{x_t}{w_{t-1}} - y_t} x_t - w^*
 \\
 &= \pa{I - \eta x_t \otimes x_t} \pa{w_{t-1}- w^*} + \eta x_t y_t - \eta x_t \iprod{x_t}{w^*}
 \\
 &= \pa{I - \eta x_t \otimes x_t} \pa{w_{t-1}- w^*} + \eta \varepsilon_t x_t 
 \\
 &= M_{t-1,t} \pa{w_{t-1}- w^*} + \eta \varepsilon_t x_t 
 \\
 &= M_{0,t} \pa{w_0 - w^*} + \eta \sum_{j=1}^t M_{j,t} \varepsilon_j x_j.
 \end{split}
\end{equation}
Thus, the averaged iterates satisfy
\begin{align*}
 \tw_n - w^* &= c_n \cdot \sum_{t=0}^{n} \pa{1-\gamma\lambda}^t \pa{w_t - w^*}
 \\
 &= c_n \cdot \sum_{t=0}^{n} \pa{1-\gamma\lambda}^t \pa{M_{0,t} \pa{w_0 - w^*} + \eta \sum_{j=1}^t M_{j,t} \varepsilon_j x_j}.
\end{align*}
We first show a simple upper bound on the excess risk $\Delta(\tw_n) = \norm{\Sigma^{1/2} \pa{\tw_t - w^*}}^2$:
\begin{lemma}\label{lem:riskbound1}
 \[
  \EE{\Delta(\tw_n)} \le 
  \frac{2 c_n^2}{\gamma} \sum_{t=0}^n (1-\gamma\lambda)^{2t} \EE{\norm{\Sigma^{1/2}\pa{\Sigma + \lambda I}^{1/2} \pa{w_t - w^*}}^2}
 \]
\end{lemma}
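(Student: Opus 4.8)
The plan is to expand the squared norm defining $\Delta(\tw_n)$ and to use the martingale structure of the SGD noise to collapse the off-diagonal terms into a single geometric operator series. Writing $a_t = \pa{1-\gamma\lambda}^t$ and $u_t = w_t - w^*$, the excerpt already gives $\tw_n - w^* = c_n\sum_{t=0}^n a_t u_t$, so that
\[
 \EE{\Delta(\tw_n)} = c_n^2\, \EE{\norm{\textstyle\sum_{t=0}^n a_t \Sigma^{1/2} u_t}^2} = c_n^2 \sum_{t,s} a_t a_s\, \EE{\iprod{u_t}{\Sigma u_s}}.
\]
First I would separate the diagonal contribution $\sum_t a_t^2 \EE{\iprod{u_t}{\Sigma u_t}}$ from twice the upper-triangular sum over $t<s$, and treat the latter by the cancellation described next.

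The cross terms are controlled by the fact that the increments added after round $t$ are conditionally mean-zero. Iterating~\eqref{eq:w_decomp} from $t$ to $s>t$ gives $u_s = M_{t,s} u_t + \eta \sum_{j=t+1}^s M_{j,s}\varepsilon_j x_j$; since $\EEt{\varepsilon_j x_j}=0$ for $j>t$ and $M_{t,s}$ is independent of $\F_t$, this yields $\EEt{u_s} = \pa{I-\eta\Sigma}^{s-t} u_t$. Inserting this into each cross term reduces it to a quadratic form in $u_t$ alone,
\[
 \EE{\iprod{u_t}{\Sigma u_s}} = \EE{\iprod{u_t}{\Sigma \pa{I-\eta\Sigma}^{s-t} u_t}} \qquad (t<s),
\]
which is precisely what produces the small $c_n^2 \sum_t a_t^2$ prefactor, in contrast to the $c_n \sum_t a_t = 1$ one would get from a naive Jensen bound.

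Next I would resum over $s$. Using $a_t a_s = a_t^2 \pa{1-\gamma\lambda}^{s-t}$ together with the identity $\pa{1-\gamma\lambda}\pa{I-\eta\Sigma} = I - \gamma\pa{\Sigma + \lambda I}$ — which holds exactly because $\eta = \frac{\gamma}{1-\gamma\lambda}$ and is the same mechanism underlying Proposition~\ref{prop:limit_eq} — the doubled cross-term sum becomes
\[
 2 \sum_{t=0}^n a_t^2\, \EE{\IPROD{u_t}{\Sigma\Bpa{\textstyle\sum_{m\ge 1}\pa{I - \gamma\pa{\Sigma+\lambda I}}^m} u_t}}.
\]
The operator series converges because $\eta \le \frac{1}{2R^2}$, $\Sigma \preccurlyeq R^2 I$ and $\gamma\lambda\le\frac12$ give $0 \preccurlyeq I - \gamma\pa{\Sigma+\lambda I} \prec I$, and it sums in closed form to an explicit function of $\Sigma+\lambda I$ commuting with $\Sigma$. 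Combining it with the diagonal term leaves a bound of the shape $\frac{2c_n^2}{\gamma}\sum_{t=0}^n a_t^2\, \EE{\iprod{u_t}{T u_t}}$ for one explicit self-adjoint operator $T$.

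The final and most delicate step is to dominate this resummed per-round operator by the target weight $\Sigma\pa{\Sigma+\lambda I}$, so that $\iprod{u_t}{T u_t}$ is replaced by $\norm{\Sigma^{1/2}\pa{\Sigma+\lambda I}^{1/2} u_t}^2$ with the stated factor $\frac{2}{\gamma}$. Because $T$ is a function of $\Sigma$, this is a scalar inequality in the eigenvalues of $\Sigma$ that must be checked against the available relations between $\gamma$, $\eta$, $\lambda$ and the operator bound $\gamma\pa{\Sigma+\lambda I}\preccurlyeq I$. I expect this comparison — in particular isolating the exact power of $\pa{\Sigma+\lambda I}$ that the resummed operator controls and matching the constant — to be the main obstacle, while the expansion and the conditional-expectation cancellation upstream are essentially mechanical; the accompanying control of the $c_n^2$-factors is already provided by Lemma~\ref{lem:cnbound}.
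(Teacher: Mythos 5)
Your plan is, up to the last step, exactly the paper's proof of Lemma~\ref{lem:riskbound1}: the same expansion into a diagonal part plus twice the upper-triangular sum, the same martingale reduction $\EEt{w_s - w^*} = \pa{I-\eta\Sigma}^{s-t}\pa{w_t-w^*}$ for $s>t$, and the same rewriting via $\pa{1-\gamma\lambda}\pa{I-\eta\Sigma} = I - \gamma\pa{\Sigma+\lambda I}$ followed by a geometric resummation. One small imprecision: extending the sum over $s$ from $n$ to $\infty$ is an inequality (you are adding nonnegative quadratic forms, since $I-\gamma\Sigma-\gamma\lambda I \succcurlyeq \pa{1-\gamma\lambda}\pa{1-\eta R^2} I \succcurlyeq 0$ commutes with $\Sigma$), not the identity your ``becomes'' suggests; the paper marks this step explicitly as ``adding positive terms to the sum.''

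The step you defer as ``the main obstacle,'' however, is where the proposal stalls, and your suspicion that the scalar comparison is problematic is correct --- but the resolution is not a delicate eigenvalue estimate. Carrying out the resummation with $A = \gamma\pa{\Sigma+\lambda I} \preccurlyeq I$ gives $\sum_{m\ge 1}\pa{I-A}^m = A^{-1} - I$, so the doubled cross term is at most $\frac{2}{\gamma}\sum_t a_t^2\, \EE{\iprod{u_t}{\Sigma\pa{\Sigma+\lambda I}^{-1}u_t}} - 2\sum_t a_t^2\, \EE{\iprod{u_t}{\Sigma u_t}}$, and the subtracted term absorbs the diagonal contribution, leaving total weight $\frac{2}{\gamma}\Sigma\pa{\Sigma+\lambda I}^{-1}$ --- the \emph{inverse} first power. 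Dominating this by your target weight $\frac{2}{\gamma}\Sigma\pa{\Sigma+\lambda I}$ is genuinely false in operator order whenever $\Sigma$ has eigenvalues $s$ with $s+\lambda < 1$, so the comparison you planned cannot be completed. What rescues you is that the exponent in the lemma statement is a typo: it should read $\pa{\Sigma+\lambda I}^{-1/2}$, as the paper's downstream use confirms --- the quantities $\Delta_1$ and $\Delta_{2,t}$ fed into Equation~\eqref{eq:risk_decomp} both carry the weight $\Sigma\pa{\Sigma+\lambda I}^{-1}$, and Lemma~\ref{lem:noise} is proved for exactly that weight. With the corrected exponent, your own resummation finishes the proof immediately (together with Lemma~\ref{lem:cnbound} for the $c_n^2$ factors), and no further operator comparison is needed.
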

The proof is included in Appendix~\ref{app:riskbound1}.
In order to further upper bound the right-hand side in the bound stated in Lemma~\ref{lem:riskbound1}, we can combine the decomposition of 
$w_t - w^*$ in Equation~\eqref{eq:w_decomp} with the Minkowski inequality to get
\begin{equation}\label{eq:risk_decomp}
 \begin{split}
 &\sum_{t=0}^n (1-\gamma\lambda)^{2t} \EE{\norm{\Sigma^{1/2}\pa{\Sigma + \lambda I}^{1/2} \pa{w_t - w^*}}^2}
 \\
 & \le 2 \underbrace{\sum_{t=0}^n (1-\gamma\lambda)^{2t} \EE{\norm{\Sigma^{1/2}\pa{\Sigma + \lambda I}^{1/2} M_{0,t} \pa{w_0 - 
w^*}}^2}}_{\Delta_{1}}
 \\
 &\qquad\qquad\qquad\qquad\qquad\qquad + 2 \eta^2 \sum_{t=0}^n (1-\gamma\lambda)^{2t} \underbrace{\EE{\norm{\Sigma^{1/2}\pa{\Sigma + 
\lambda I}^{1/2}\sum_{j=1}^t M_{j,t} \varepsilon_j x_j}^2}}_{\Delta_{2,t}}
 \end{split}
\end{equation}
The first term in the above decomposition can be thought of as the excess risk of a ``noiseless'' process (where $\sigma = 0$) and the 
second term as that of a ``pure noise'' process (where $w_0 = w^*$). The rest of the analysis is devoted to bounding these two terms.

We begin with the conceptually simpler case of bounding $\Delta_2,t$, which can be done uniformly for 
all $t$. In particular, we have the following lemma:
\begin{lemma}\label{lem:noise}
For any $t$, we have
 \[
  \Delta_{2,t} \le \frac{\eta \sigma^2}{2 - \eta R^2} \trace{\Sigma^2 \pa{\Sigma + \lambda I}^{-2}}.
 \]
\end{lemma}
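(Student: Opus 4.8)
The plan is to view the ``pure noise'' vector $n_t = \sum_{j=1}^t M_{j,t}\varepsilon_j x_j$ as the state of a linear stochastic recursion and to bound the single second moment $\Delta_{2,t} = \trace{W\,C_t}$, where $C_t = \EE{n_t\otimes n_t}$ and $W\succcurlyeq 0$ is the (HPSD, $\Sigma$-commuting) weighting operator induced by the norm in the definition of $\Delta_{2,t}$. First I would record the one-step recursion $n_t = \pa{I-\eta x_t\otimes x_t}n_{t-1} + \varepsilon_t x_t$ with $n_0 = 0$, which follows directly from $M_{j,t} = \pa{I-\eta x_t\otimes x_t}M_{j,t-1}$ and $M_{t,t}=I$, and then the crucial fact that $\EE{n_t} = \pa{I-\eta\Sigma}\EE{n_{t-1}} = 0$ for all $t$, using $\EE{\varepsilon x}=0$ and $n_0 = 0$.

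This zero-mean property is what makes the covariance recursion exact, and establishing it carefully is the conceptual heart of the argument. Because $n_{t-1}$ is measurable with respect to the history and independent of $\pa{x_t,\varepsilon_t}$ with $\EE{n_{t-1}}=0$, the two cross terms arising in $\EE{n_t\otimes n_t}$ vanish: each reduces to an expectation containing the factor $\iprod{\EE{n_{t-1}}}{x_t}=0$. This is exactly where only the weak assumption $\EE{\varepsilon x}=0$ (rather than $\EEcc{\varepsilon}{x}=0$) is needed. Consequently $C_t$ obeys the \emph{exact} operator recursion $C_t = \TT(C_{t-1}) + \Xi$, where $\TT(C) = \EE{\pa{I-\eta x\otimes x}C\pa{I-\eta x\otimes x}}$ is the trace-self-adjoint, PSD-preserving transfer map and $\Xi = \EE{\varepsilon^2\,x\otimes x}\preccurlyeq\sigma^2\Sigma$ by Assumption~\ref{ass:res}. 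Unrolling from $C_0=0$ gives the finite sum $C_t = \sum_{s=0}^{t-1}\TT^s(\Xi)$.

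Next I would transfer the weighting onto the noise. Using self-adjointness of $\TT$, then $\Xi\preccurlyeq\sigma^2\Sigma$ together with $\TT^s(W)\succcurlyeq 0$,
\[
\Delta_{2,t} = \trace{W\,C_t} = \sum_{s=0}^{t-1}\trace{\TT^s(W)\,\Xi}\le\sigma^2\sum_{s=0}^{t-1}\trace{\TT^s(W)\,\Sigma} = \sigma^2\trace{\Sigma\,N_t},\quad N_t := \sum_{s=0}^{t-1}\TT^s(W).
\]
The payoff is that $N_t$ telescopes: $\pa{\mathcal I-\TT}(N_t) = W - \TT^t(W)\preccurlyeq W$. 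Expanding the transfer map shows $\pa{\mathcal I-\TT}(N) = \eta\pa{\Sigma N + N\Sigma} - \eta^2\EE{\iprod{Nx}{x}\,x\otimes x}$, so taking traces and using $\iprod{N_t x}{x} = \trace{N_t\pa{x\otimes x}}$ with Assumption~\ref{ass:cov} to bound $\EE{\norm{x}^2\iprod{N_t x}{x}} = \trace{N_t\,\EE{\norm{x}^2 x\otimes x}}\le R^2\trace{\Sigma N_t}$ gives the clean inequality $\eta\pa{2-\eta R^2}\trace{\Sigma N_t}\le\trace{W}$. This is precisely where the decisive $2-\eta R^2$ factor (and the hypothesis $\eta\le 1/(2R^2)$) enters: Assumption~\ref{ass:cov} lets me reabsorb the expansive fourth-moment term into the contractive Lyapunov part.

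Chaining the last two displays bounds $\Delta_{2,t}$ uniformly in $t$ by $\tfrac{\sigma^2\trace{W}}{\eta\pa{2-\eta R^2}}$, and the remaining step is purely deterministic: since $W$ commutes with $\Sigma$, passing to the eigenbasis of $\Sigma$ reads off $\trace{W} = \trace{\Sigma^2\pa{\Sigma+\lambda I}^{-2}}$, delivering the claimed bound. I expect the main obstacle to be the second step — justifying the vanishing of the cross terms and hence the exactness of $C_t = \TT(C_{t-1})+\Xi$ — since that is the only place where the generality of the residual assumption must be exploited; everything afterwards is a controlled telescoping/Lyapunov estimate, and the constant $2-\eta R^2$ emerges automatically from Assumption~\ref{ass:cov}.
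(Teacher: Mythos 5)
Your setup is sound and, up to the final line, is essentially a ``dual'' repackaging of the paper's argument: the cross terms vanish by linearity in the zero-mean $n_{t-1}$, which is independent of the time-$t$ data (a cleaner justification than the paper's term-by-term cancellation in the double sum, but the same mechanism, and indeed the only correct way to use the weak condition $\EE{\varepsilon x}=0$); the exact recursion $C_t=\TT\pa{C_{t-1}}+\Xi$ and the emergence of the factor $\eta\pa{2-\eta R^2}$ from Assumption~\ref{ass:cov} both check out. By self-adjointness of $\TT$ under the trace, your quantity $\sum_{s}\trace{\TT^s(W)\,\Sigma}$ coincides with the paper's $\trace{\bpa{\sum_{s}\TT^s(\Sigma)}W}$, which the paper controls by telescoping the operator-valued Lyapunov sequence $\EE{M_{k,t}\Sigma\pa{\Sigma+\lambda I}^{-1}M_{k,t}^*}$ in Lemma~\ref{lem:recurrence}.

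The genuine gap is your last step. With $W=\Sigma\pa{\Sigma+\lambda I}^{-1}$ you have $\trace{W}=\trace{\Sigma\pa{\Sigma+\lambda I}^{-1}}=\sum_i s_i/(s_i+\lambda)$, which is \emph{not} $\trace{\Sigma^2\pa{\Sigma+\lambda I}^{-2}}=\trace{W^2}=\sum_i s_i^2/(s_i+\lambda)^2$; each summand of the former exceeds the corresponding summand of the latter by the factor $(s_i+\lambda)/s_i$, which is unbounded over spectra. So your argument only delivers $\Delta_{2,t}\le\frac{\sigma^2}{\eta\pa{2-\eta R^2}}\trace{\Sigma\pa{\Sigma+\lambda I}^{-1}}$, a strictly weaker bound (the full effective dimension rather than its square-damped version). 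The loss occurs because you take traces too early: the inequality $\trace{\pa{\Iw-\TT}N_t}\le\trace{W}$ applies the weight $W$ only once, whereas in the target bound $W$ must enter twice --- once inside the Lyapunov function and once as the outer test operator. The repair is to keep the telescoping at the operator level, i.e., establish $\sum_{s=0}^{t-1}\TT^s(\Sigma)\preccurlyeq\frac{1}{\eta\pa{2-\eta R^2}}\,\Sigma\pa{\Sigma+\lambda I}^{-1}$ (this is Lemma~\ref{lem:recurrence} summed over $k$, using $M_{t,t}=I$) and only then trace against $W$, which produces $\trace{W^2}$. A minor, separate point: your prefactor $1/\bpa{\eta\pa{2-\eta R^2}}$ disagrees with the $\eta/\pa{2-\eta R^2}$ printed in the lemma, but it matches the paper's own appendix computation and the way the lemma is consumed in assembling Theorem~\ref{thm:main}, so that discrepancy lies in the statement rather than in your derivation.
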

The rather technical proof is presented in Appendix~\ref{app:noise}.
We now turn to bounding the excess risk of the ``noiseless'' process, $\Delta_1$:
\[
 \Delta_1
 =
 \sum_{t=0}^n (1-\gamma\lambda)^{2t} \cdot \EE{\trace{M_{0,t}^* \Sigma \pa{\Sigma + \lambda I}^{-1} M_{0,t} \pa{w_0 - w^*}\otimes \pa{w_0 - 
w^*}}}.
\]
The following lemma states a bound on $\Delta_1$.
\begin{lemma}\label{lem:noiseless}
 \[
  \Delta_1 \le \frac {1}{2\gamma} \trace{\Sigma \pa{\Sigma + \lambda I}^{-1} \pa{\Sigma + \textstyle{\frac{\lambda}{2}} I}^{-1} E_0} + 
\frac {1}{4\gamma} \trace{\Sigma \pa{\Sigma + \lambda I}^{-1}}\trace{\pa{\Sigma + \textstyle{\frac{\lambda}{2}} I}^{-1}E_0} 
 \]
\end{lemma}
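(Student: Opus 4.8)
The plan is to turn the sum over $t$ into a single trace against a fixed resolvent and then to control the resulting object through a discrete Lyapunov inequality. Writing $E_0 = \pa{w_0-w^*}\otimes\pa{w_0-w^*}$ and using cyclicity of the trace together with the fact that $\Sigma$ and $\pa{\Sigma+\lambda I}^{-1}$ commute, I first rewrite $\Delta_1 = \trace{\Sigma\pa{\Sigma+\lambda I}^{-1} S_n}$, where $S_n = \sum_{t=0}^n \pa{1-\gamma\lambda}^{2t}\EE{M_{0,t}\,E_0\,M_{0,t}^*}$ collects the weighted second moments of the noiseless iterates. Denoting $B_t = \EE{M_{0,t}E_0M_{0,t}^*}$ and exploiting that $x_t$ is independent of $M_{0,t-1}$, the sequence obeys the linear recursion $B_t = \Phi(B_{t-1})$ with $\Phi(A) = A - \eta\pa{\Sigma A + A\Sigma} + \eta^2\mathcal{T}(A)$ and $\mathcal{T}(A) = \EE{\pa{x\otimes x}A\pa{x\otimes x}}$; note that $\Phi$ preserves the HPSD cone.

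The key step is a telescoping identity. Setting $\wt B_t = \pa{1-\gamma\lambda}^{2t}B_t$, linearity of $\Phi$ gives $\wt B_t = \pa{1-\gamma\lambda}^2\Phi(\wt B_{t-1})$, so summing $\wt B_{t-1} - \wt B_t$ from $t=1$ to $n+1$ yields the operator identity $E_0 - \wt B_{n+1} = \pa{\mathcal{I} - \wt\Phi}(S_n)$, where $\wt\Phi = \pa{1-\gamma\lambda}^2\Phi$ and $\mathcal{I}$ is the identity superoperator. Since $\wt B_{n+1}\succcurlyeq 0$, this furnishes the \emph{inequality} $\pa{\mathcal{I}-\wt\Phi}(S_n)\preccurlyeq E_0$, valid for every finite $n$. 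Expanding $\mathcal{I}-\wt\Phi$ and using $1-\pa{1-\gamma\lambda}^2 = \gamma\lambda\pa{2-\gamma\lambda}$, the left-hand side splits into a drift part proportional to $\Sigma S_n + S_n\Sigma$, a regularization part proportional to $\gamma\lambda\pa{2-\gamma\lambda}S_n$, and a fourth-moment part $-\gamma^2\mathcal{T}(S_n)$. Crucially, the drift and regularization parts combine into a single \emph{shifted} Lyapunov operator $\pa{\Sigma + aI}S_n + S_n\pa{\Sigma + aI}$ whose shift obeys $a\ge \lambda/2$; this is exactly where the resolvent $\pa{\Sigma+\frac{\lambda}{2} I}^{-1}$ in the statement originates.

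To extract $\trace{\Sigma\pa{\Sigma+\lambda I}^{-1}S_n}$ I would test the operator inequality against the HPSD, $\Sigma$-commuting operator $Q = \tfrac12\pa{\Sigma+aI}^{-1}\Sigma\pa{\Sigma+\lambda I}^{-1}$. The Lyapunov trace identity $\trace{Q\bpa{\pa{\Sigma+aI}S_n + S_n\pa{\Sigma+aI}}} = \trace{\Sigma\pa{\Sigma+\lambda I}^{-1}S_n}$ then reproduces $\Delta_1$ itself, with the scalar prefactor $\tfrac{1}{2\gamma}$ coming from the chosen relation among $\eta,\gamma,\lambda$, while the bound $a\ge\lambda/2$ together with $\pa{\Sigma+aI}^{-1}\preccurlyeq\pa{\Sigma+\frac{\lambda}{2} I}^{-1}$ turns $\trace{QE_0}$ into the first term of the claim. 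The remaining fourth-moment contribution $\propto\trace{Q\mathcal{T}(S_n)}$ is the main obstacle, since it couples $S_n$ back to itself: here I would invoke Assumption~\ref{ass:cov} to dominate $\mathcal{T}(S_n)$ by a scalar trace functional of $S_n$ times $\Sigma$, and then bound that functional by re-testing the \emph{same} operator inequality against $\pa{\Sigma+\frac{\lambda}{2} I}^{-1}$. This self-consistent (bootstrapping) argument produces the product structure of the second term, with $\trace{\Sigma\pa{\Sigma+\lambda I}^{-1}}$ arising from the leftover factor $\Sigma$ and $\trace{\pa{\Sigma+\frac{\lambda}{2} I}^{-1}E_0}$ from the re-tested right-hand side $E_0$. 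The step-size restriction $\eta\le 1/\pa{2R^2}$ enters precisely to keep the fourth-moment perturbation dominated, so that all resolvents above stay HPSD and the constants come out as stated.
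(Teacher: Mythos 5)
Your first half is a legitimate dual of the paper's argument and is, if anything, cleaner. The paper propagates the test operator backwards, writing $\EE{M_{0,t}^*AM_{0,t}}=\pa{\Iw-\eta\TT}^tA$ with $\TT A=\Sigma A+A\Sigma-\eta\Sw A$, bounds the finite sum by the infinite one, sums the geometric series, and lands on $W=\pa{\TT+\lambda\Iw}^{-1}\left[\Sigma\pa{\Sigma+\lambda I}^{-1}\right]$. Your telescoped inequality $\pa{\Iw-\wt\Phi}\pa{S_n}\preccurlyeq E_0$ is the forward-propagated, finite-$n$ version of the same resolvent computation (note $\Phi=\Iw-\eta\TT$), and your shifted Lyapunov operator with shift $a\ge\lambda/2$ is exactly where the paper's $\frac 12\Sigma\pa{\Sigma+\lambda I}^{-1}\pa{\Sigma+\frac{\lambda}{2}I}^{-1}$ comes from, via $\pa{\UU_L+\UU_R+\lambda\Iw}^{-1}$ acting on a $\Sigma$-commuting operator. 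For the first term the constants do close, but only because every eigenvalue of $\Sigma$ is at most $R^2\le 1/\pa{2\eta}$: your normalization leaves a stray factor $\pa{1-\gamma\lambda}^{-1}$ that must be compensated by $a>\lambda$, and this step deserves to be written out.

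The genuine gap is in the fourth-moment term, and it is structural. Testing $\pa{\Iw-\wt\Phi}\pa{S_n}\preccurlyeq E_0$ against $\pa{\Sigma+\frac{\lambda}{2}I}^{-1}$ produces a drift term proportional to $\trace{S_n}$ with \emph{no} factor of $\Sigma$, whereas Assumption~\ref{ass:cov} controls $\Sw$ only against $R^2\Sigma$; the resulting perturbation is of order $\gamma^2R^2\norm{\pa{\Sigma+\frac{\lambda}{2}I}^{-1}}\trace{\Sigma S_n}$ and can be absorbed into that drift only when $\lambda\gtrsim\gamma R^4$, which is not implied by $\eta\le 1/\pa{2R^2}$ and fails as $\lambda\to 0$. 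Attempting instead to bound $\trace{\Sigma S_n}$ directly is circular, since that quantity is essentially $\Delta_1$ itself. The paper closes the loop with two ingredients your sketch is missing: (i) the resolvent expansion is applied on the $E_0$ side, so the cross term is $\eta\trace{\Sw W\pa{\UU_L+\UU_R+\lambda\Iw}^{-1}E_0}$, and the \emph{rank-one} structure of $E_0$ gives $\pa{\UU_L+\UU_R+\lambda\Iw}^{-1}E_0\preccurlyeq\frac 12\trace{\pa{\Sigma+\frac{\lambda}{2}I}^{-1}E_0}I$ --- the only place the factor $\trace{\pa{\Sigma+\frac{\lambda}{2}I}^{-1}E_0}$, rather than $\norm{w_0-w^*}^2$, can originate; your proposal never uses rank-one-ness. (ii) The self-bounding step is run on $\trace{\Sw W}$ through the traced fixed-point identity $\trace{\Sigma\pa{\Sigma+\lambda I}^{-1}}=2\trace{W\pa{\Sigma+\frac{\lambda}{2}I}}-\eta\trace{\Sw W}$, whose drift $2\trace{W\Sigma}$ \emph{does} carry the $\Sigma$ needed to dominate $\trace{\Sw W}/R^2$, giving $\trace{\Sw W}\le R^2\trace{\Sigma\pa{\Sigma+\lambda I}^{-1}}$ under $\eta\le 1/\pa{2R^2}$. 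Without switching the expansion to the $E_0$ side for the cross term, your bootstrapping does not terminate with the stated constants.
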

The extremely technical proof of this theorem is presented in Appendix~\ref{app:noiseless}.

The proof of Theorem~\ref{thm:main} is concluded by plugging the bounds of Lemmas~\ref{lem:noise} and~\ref{lem:noiseless} 
into Equation~\eqref{eq:risk_decomp} and using Lemma~\ref{lem:riskbound1} to obtain
\begin{align*}
 \EE{\Delta(\tw_n)}
 \le &\frac{c_n^2}{\gamma^2} \pa{2 \trace{\Sigma \pa{\Sigma + \lambda I}^{-1} \pa{\Sigma + \textstyle{\frac{\lambda}{2}} I}^{-1} E_0} 
+\trace{\Sigma 
\pa{\Sigma + \lambda I}^{-1}}\trace{\pa{\Sigma + \textstyle{\frac{\lambda}{2}} I}^{-1}E_0}}
\\
&+ \frac{4 \eta c_n^2}{\gamma} \sum_{t=0}^n (1-\gamma\lambda)^{2t} \frac{\sigma^2 \trace{\Sigma^2 \pa{\Sigma + 
\lambda I}^{-2}}}{2 - \eta R^2} 
\end{align*}
Now we can finish by using the bounds on $c_n^2$ and $c_n^2 \sum_{t=0}^n (1-\gamma\lambda)^{2t}$ given in 
Lemma~\ref{lem:cnbound}.

\acks
GN is supported by the UPFellows Fellowship (Marie Curie COFUND program n${^\circ}$ 600387).
LR is funded by the Air Force project FA9550-17-1-0390 (European Office of Aerospace Research and Development) and by the FIRB project 
RBFR12M3AC (Italian Ministry of Education, University and Research). The authors thank Francesco Orabona, Csaba Szepesv\'ari and G\'abor 
Lugosi for interesting discussions.

\bibliography{sgd}

\appendix

\section{The proof of Proposition~\ref{prop:finite_eq}}\label{app:exp}
The proof is similar to that of Proposition~\ref{prop:limit_eq}, although a little more cluttered due to the normalization constants 
involved in the definition of $\tw_t$. The analysis in this case relies on defining the \emph{truncated} geometric random variable $G$ 
with law 
\[
\PP{G = k} = \frac{\gamma \lambda}{1-(1-\gamma\lambda)^t} (1-\gamma\lambda)^{k}
\]
for all $k=0,1,2,\dots,t$. Using the notation $b_t = 1-(1-\gamma\lambda)^t$, we have $\PP{G\ge k} = b_t^{-1} (1-\gamma \lambda)^k$. 
Again, letting $\EEG{\cdot}$ stand for taking expectations with respect to the distribution of $G$, we rewrite the regularized SGD 
iterates as
\begin{align*}
 \EE{\hw_t} &= \gamma \sum_{k=0}^t \pa{I - \gamma \Sigma - \gamma \lambda I}^k \EE{xy}
 = \gamma \sum_{k=0}^t \bpa{\pa{1-\gamma \lambda}I - \gamma \Sigma}^k \EE{xy}
 \\
 &= \gamma \sum_{k=0}^t \pa{I - \frac{\gamma}{1-\gamma \lambda} \Sigma}^k \pa{1-\gamma \lambda}^k \EE{xy}
 = \gamma \sum_{k=0}^t \pa{I - \eta \Sigma}^k (1-\gamma\lambda)^k \EE{xy}
 \\
 &= \gamma \sum_{k=0}^t \pa{I - \eta \Sigma}^k \bpa{b_t \PP{G \ge k}} \EE{xy} = \gamma b_t \EEG{\sum_{k=0}^t \pa{I - \eta \Sigma}^k \II{G 
\ge k} \EE{xy}}
 \\
 &= \gamma b_t \EEG{\sum_{k=0}^G \pa{I - \eta \Sigma}^k \EE{xy}} = \gamma b_t \EEG{\frac{1}{\eta}\EE{w_G}} 
 \\
 &= \frac{\gamma}{\eta} \sum_{k=0}^{t} b_t \PP{G = k} \EE{w_k}
= \frac{\gamma}{\eta} \sum_{k=0}^{t} \gamma \lambda (1-\gamma\lambda)^{k} \EE{w_k}
 \\
 &= \frac{\gamma}{\eta} \pa{1 - \pa{1-\gamma\lambda}^t} \EE{\tw_t},
\end{align*}
where we used  $\sum_{k=0}^t (1-\gamma\lambda)^k = \frac{1 - \pa{1-\gamma\lambda}^t}{\gamma \lambda}$ and the definition of $\tw_t$ in the 
last step. This 
concludes the proof.

\section{Tools for proving Theorem~\ref{thm:main}}\label{app:proofs}

\subsection{The proof of Lemma~\ref{lem:cnbound}}\label{app:cnbound}
Regarding the first statement, we have
\begin{align*}
 c_n &= \frac{1}{\sum_{t=0}^n (1-\gamma\lambda)^t} = \frac{\gamma\lambda}{1 - (1-\gamma\lambda)^{n+1}}
 \le \frac{\gamma\lambda}{1 - e^{-\gamma\lambda\pa{n+1}}} \le \gamma\lambda \pa{1 + \frac{1}{\gamma\lambda\pa{n+1}}},
\end{align*}
where the first inequality uses $1-x\le e^{-x}$ that holds for all $x\in\real$ and the second one uses $\frac{1}{1-e^{-x}} \le \frac 1x + 
1$ that holds for all $x>0$.
The second statement is proven as
\begin{align*}
 c_n^2 \sum_{t=1}^n \pa{1-\gamma\lambda}^{2t} &= \frac{\sum_{t=1}^n \pa{1-\gamma\lambda}^{2t} }{\pa{\sum_{t=0}^n 
(1-\gamma\lambda)^t}^2} = \frac{\pa{\gamma\lambda}^2 }{\pa{1-\pa{1-\gamma\lambda}^2}}\cdot\frac{\pa{1 - 
(1-\gamma\lambda)^{2\pa{n+1}}}}{\pa{1 - 
(1-\gamma\lambda)^{n+1}}^2}
\\
&=
\frac{\pa{\gamma\lambda}^2 }{\gamma\lambda\pa{2-\gamma\lambda}} \cdot \frac{1 + (1-\gamma\lambda)^{\pa{n+1}}}{1 - (1-\gamma\lambda)^{n+1}} 
\le
\frac{\gamma\lambda}{2-\gamma\lambda} \cdot \frac{1 + e^{-\gamma\lambda\pa{n+1}}}{1 - e^{-\gamma\lambda \pa{n+1}}}
\\
&=
\frac{\gamma\lambda}{2-\gamma\lambda} \cdot \pa{1 + \frac{2 e^{-\gamma\lambda\pa{n+1}}}{1 - e^{-\gamma\lambda \pa{n+1}}}}
\le
\frac{\gamma\lambda}{2-\gamma\lambda} \cdot \pa{1 + \frac{2}{\gamma\lambda \pa{n+1}}},
\end{align*}
where the first inequality again uses $1-x\le e^{-x}$ that holds for all $x\in\real$ and the second one uses $\frac{e^{-x}}{1-e^{-x}} \le 
\frac 1x$ that holds for all $x>0$.
\jmlrQED

\subsection{The proof of Lemma~\ref{lem:riskbound1}}\label{app:riskbound1}
We start by noticing that
\begin{align}\label{eq:expansion}
 &\norm{\Sigma^{1/2} \pa{\bw_n - w^*}}^2 = c_n^2 \sum_{t=0}^n \sum_{k=0}^n (1-\gamma\lambda)^{t+k} \Iprod{w_t - w^*}{\Sigma 
\pa{w_k - w^*}}
\\
&\quad= c_n^2 \sum_{t=0}^n (1-\gamma\lambda)^{2t} \Iprod{w_t - w^*}{\Sigma \pa{w_t - w^*}}  + 2 c_n^2 \sum_{t=0}^n 
\sum_{k=t+1}^n (1-\gamma\lambda)^{t+k} \Iprod{w_t - w^*}{\Sigma \pa{w_k - w^*}}.\nonumber
\end{align}
To handle the second term, we first notice that for any $t$ and $k>t$, we have
\begin{align*}
 \EEt{\Iprod{w_t - w^*}{\Sigma \pa{w_k - w^*}}} &= \EEt{\iprod{w_t - w^*}{\Sigma \pa{M_{t,k} \pa{w_t - w^*} + \eta 
\sum_{j=i+1}^k M_{j,k} \varepsilon_j x_j}}}
\\
&=  \iprod{w_t - w^*}{\Sigma \pa{I - \gamma \Sigma}^{k-t} \pa{w_t - w^*}},
\end{align*}
where we used $\EEt{\varepsilon_k x_k} = 0$ that holds for $k>t$ and $\EEt{M_{t,k}} = 
\pa{I - \gamma \Sigma}^{k-t}$. Using this insight, we obtain
\begin{align*}
&c_n^2 \EE{\sum_{t=0}^n \sum_{k=t+1}^n (1-\gamma\lambda)^{t+k} \Iprod{w_t - w^*}{\Sigma \pa{w_k - w^*}}}
\\
 &\quad = c_n^2 \EE{\sum_{t=0}^n \sum_{k=t+1}^n (1-\gamma\lambda)^{t+k} \Iprod{w_t - w^*}{\Sigma \pa{I - \gamma \Sigma}^{k-t} \pa{w_t - 
w^*}}}
 \\
 &\quad = c_n^2 \EE{\sum_{t=0}^n (1-\gamma\lambda)^t \iprod{w_t - w^*}{\Sigma \pa{\sum_{k=t+1}^{n} \pa{1-\gamma\lambda}^k 
\pa{I-\eta\Sigma}^{k-t} \pa{w_t - w^*}}}}
\\
 &\quad = c_n^2 \EE{\sum_{t=0}^n (1-\gamma\lambda)^t \iprod{w_t - w^*}{\Sigma \pa{\sum_{k=t+1}^{n} \pa{1-\gamma\lambda}^t 
\pa{I-\eta\pa{1-\gamma\lambda}\Sigma - \gamma\lambda I}^{k-t} \pa{w_t - w^*}}}}
\\
 &\quad \le c_n^2 \EE{\sum_{t=0}^n (1-\gamma\lambda)^{2t} \iprod{w_t - w^*}{\Sigma \pa{\sum_{k=t+1}^{\infty} \pa{I-\gamma\Sigma - 
\gamma\lambda I}^{k-t} \pa{w_t - w^*}}} }
\\
 &\qquad\qquad\qquad\mbox{(adding positive terms to the sum)}
\\
 &\quad = \frac{c_n^2}{\gamma} \EE{\sum_{t=0}^n (1-\gamma\lambda)^{2t} \iprod{w_t - w^*}{\Sigma \pa{\Sigma + \lambda 
I}^{-1} \pa{I-\gamma\Sigma - \gamma\lambda I} \pa{w_t - w^*}}}
\\
 &\qquad\qquad\qquad\mbox{(using $\sum_{j=1}^\infty (I - A)^j = A^{-1} \pa{I - A}$ that holds for $A\preccurlyeq I$)}
\\
 &\quad \le \frac{c_n^2}{\gamma} \EE{\sum_{t=0}^n (1-\gamma\lambda)^{2t} \iprod{w_t - w^*}{\Sigma \pa{\Sigma + \lambda I}^{-1} 
\pa{w_t - w^*}}}  
\\
&\qquad\qquad\qquad\qquad\qquad\qquad\qquad\qquad\qquad - c_n^2 \EE{\sum_{t=0}^n (1-\gamma\lambda)^{2t} \iprod{w_t - w^*}{\Sigma \pa{w_t - 
w^*}}}.
\end{align*}
Noticing that the last term matches the first term on the right-hand side of Equation~\eqref{eq:expansion}, the proof is concluded.
\jmlrQED

\subsection{The proof of Lemma~\ref{lem:noise}}\label{app:noise}
The proof of this lemma crucially relies on the following inequality:
\begin{lemma}\label{lem:recurrence}
 Assume that $\eta \le \frac{1}{2R^2}$. Then, for all $k<t$, we have
 \[
  \EE{M_{k+1,t} \Sigma M_{k+1,t}^*} \preccurlyeq \frac{1}{\eta \pa{2 - \eta R^2}} \pa{\EE{M_{k+1,t} 
\Sigma\pa{\Sigma + \lambda I}^{-1}M_{k+1,t}^*} - \EE{M_{k,t} \Sigma\pa{\Sigma + \lambda I}^{-1}M_{k,t}^*}}.
 \]
\end{lemma}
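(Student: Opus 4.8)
The plan is to exploit the one-step relationship between $M_{k,t}$ and $M_{k+1,t}$. Since the product defining $M_{i,j}$ places the largest index on the left, peeling off the rightmost factor gives $M_{k,t} = M_{k+1,t}\pa{I - \eta\, x_{k+1}\otimes x_{k+1}}$, where the factor $P := I - \eta\, x_{k+1}\otimes x_{k+1}$ depends only on $x_{k+1}$, which is \emph{independent} of $N := M_{k+1,t}$ (the latter being a function of $x_{k+2},\dots,x_t$ only). Writing $T := \Sigma\pa{\Sigma+\lambda I}^{-1}$ and noting that $P$ is Hermitian, the difference on the right-hand side can be written as $\EE{N\pa{T - PTP}N^*}$. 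Because conjugation by $N$ and taking expectations both preserve the Loewner order, and $N$ is independent of $x_{k+1}$, it suffices to establish a lower bound on the conditional quantity $\E_{x}\!\left[T - PTP\right]$ (where $x := x_{k+1}$) and then apply $N(\cdot)N^*$ and the outer expectation.

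The next step is to expand the quadratic. Using $\E_{x}\!\left[x\otimes x\right] = \Sigma$, a direct computation gives
\[
 \E_{x}\!\left[T - PTP\right] = \eta\pa{\Sigma T + T\Sigma} - \eta^2\, \E_{x}\!\left[\pa{x\otimes x} T \pa{x\otimes x}\right].
\]
Since $\Sigma$ and $T$ are commuting functions of $\Sigma$, the first term equals $2\eta\,\Sigma T$. The fourth-moment term is where Assumption~\ref{ass:cov} must enter: applying the rank-one identity $\pa{x\otimes x}T\pa{x\otimes x} = \iprod{x}{Tx}\pa{x\otimes x}$ together with $T\preccurlyeq I$ (so that $\iprod{x}{Tx}\le\norm{x}^2$) yields $\E_{x}\!\left[\pa{x\otimes x}T\pa{x\otimes x}\right]\preccurlyeq\EE{\norm{x}^2\, x\otimes x}\preccurlyeq R^2\Sigma$. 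This is precisely the step that converts the stepsize restriction $\eta\le\frac{1}{2R^2}$ into the factor $\pa{2-\eta R^2}$ appearing in the statement, leaving the lower bound $\E_{x}\!\left[T - PTP\right]\succcurlyeq 2\eta\,\Sigma T - \eta^2 R^2\Sigma$.

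The delicate part — the step I expect to be the main obstacle — is reconciling this lower bound with the claimed constant $\frac{1}{\eta\pa{2-\eta R^2}}$. Passing to the eigenbasis of $\Sigma$, the desired conclusion reduces to a scalar inequality comparing the leading contribution $\tfrac{2\eta s^2}{s+\lambda}$ (from $\Sigma T$) against $\eta\pa{2-\eta R^2}s$, and the $\lambda$-dependent gap here is exactly where the regularization is felt; one must therefore keep track of the commuting $\pa{\Sigma+\lambda I}^{-1}$ weighting carefully rather than discarding it, so that after summing the telescoping bound over $k$ and pairing it against the weight $\Sigma\pa{\Sigma+\lambda I}^{-1}$ one recovers $\trace{\Sigma^2\pa{\Sigma+\lambda I}^{-2}}$ in Lemma~\ref{lem:noise}. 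I would verify this scalar/eigenvalue inequality last, as it is the bookkeeping on which the precise constant, and the compatibility of the recurrence with its intended telescoping use, ultimately rest.
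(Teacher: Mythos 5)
Your setup is the same as the paper's: peel off the factor $I-\eta\,x_{k+1}\otimes x_{k+1}$, use the independence of $x_{k+1}$ from $M_{k+1,t}$ to reduce to a conditional expectation, expand the quadratic to get $\E_{x}\!\left[T-PTP\right]=\eta\pa{\Sigma T+T\Sigma}-\eta^2\,\E_{x}\!\left[\pa{x\otimes x}T\pa{x\otimes x}\right]$ with $T=\Sigma\pa{\Sigma+\lambda I}^{-1}$, and control the fourth-moment term via Assumption~\ref{ass:cov}. Up to that point everything you write is correct and matches Appendix~\ref{app:noise}.

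The gap is precisely the step you deferred, and it is not bookkeeping: it fails. Two things go wrong. First, bounding $\E_{x}\!\left[\pa{x\otimes x}T\pa{x\otimes x}\right]$ by $R^2\Sigma$ discards the $\pa{\Sigma+\lambda I}^{-1}$ weight; the paper instead bounds this term by $R^2\Sigma^2\pa{\Sigma+\lambda I}^{-1}$, which is what lets the common factor $\Sigma^2\pa{\Sigma+\lambda I}^{-1}$ be pulled out with coefficient $\eta\pa{2-\eta R^2}$. With your weaker bound the lower estimate $2\eta\,\Sigma T-\eta^2R^2\Sigma$ has eigenvalues $\eta\frac{s^2}{s+\lambda}\bpa{2-\eta R^2\pa{1+\lambda/s}}$, which is not $\ge\eta\pa{2-\eta R^2}$ times either $s$ or $\frac{s^2}{s+\lambda}$, and can even be negative for small $s$. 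Second, and more fundamentally, the scalar comparison you postpone --- $\frac{2\eta s^2}{s+\lambda}$ versus $\eta\pa{2-\eta R^2}s$ --- is false for every $\lambda>0$ and $s>0$, since it amounts to $\frac{s}{s+\lambda}\ge 1$; the deficit $2\eta\lambda\,\Sigma\pa{\Sigma+\lambda I}^{-1}$ is first order in $\eta$, so no sharpening of the second-order slack $\eta^2R^2(\cdot)$ can absorb it. The most your route (or the paper's) can deliver is the recurrence with $\Sigma^2\pa{\Sigma+\lambda I}^{-1}\preccurlyeq\Sigma$ in place of $\Sigma$ on the left-hand side. Be aware that the paper's own final line passes from one to the other by asserting $\Sigma\preccurlyeq\Sigma^2\pa{\Sigma+\lambda I}^{-1}$, which is the reverse of the true ordering when $\lambda>0$; so the obstacle you flagged is genuine, is not resolved in the paper either, and obtaining the lemma exactly as stated would require a new ingredient rather than more careful constants.
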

\begin{proof}
 We have
 \begin{align*}
  &\EE{M_{k,t} \Sigma \pa{\Sigma + \lambda I}^{-1} M_{k,t}^*} 
  \\
  &= \EE{M_{k+1,t} \pa{I - \eta x_{k+1}\otimes x_{k+1}}\Sigma \pa{\Sigma + 
\lambda I}^{-1} \pa{I - \eta x_{k+1} \otimes x_{k+1}} M_{k+1,t}^*}
\\
&= \EE{M_{k+1,t} \Sigma\pa{\Sigma + \lambda I}^{-1}\pa{I - 2\eta x_{k+1} \otimes x_{k+1}}M_{k+1,t}^*} 
\\
&\qquad + \eta^2\EE{M_{k+1,t} \pa{x_{k+1} \otimes x_{k+1}} \Sigma \pa{\Sigma + \lambda I}^{-1} \pa{x_{k+1} \otimes x_{k+1}} 
M_{k+1,t}^*}
\\
&\preccurlyeq \EE{M_{k+1,t} \pa{\Sigma\pa{\Sigma + \lambda I}^{-1}  - 2\eta \Sigma^2 \pa{\Sigma + \lambda I}^{-1}  + 
\eta^2 R^2 \Sigma^2 \pa{\Sigma + \lambda I}^{-1}} M_{k+1,t}^*}
\\
&= \EE{M_{k+1,t} \Sigma\pa{\Sigma + \lambda I}^{-1}M_{k+1,t}^*} - \eta \pa{2 - \eta R^2} \EE{ M_{k+1,t} \Sigma^2 \pa{\Sigma + 
\lambda I}^{-1} M_{k+1,t}}
 \end{align*}
 Reordering, we obtain
 \begin{align*}
  &\EE{ M_{k+1,t} \Sigma^2 \pa{\Sigma + \lambda I}^{-1} M_{k+1,t}} 
  \\
  &\qquad\qquad \le \frac{1}{\eta \pa{2 - \eta R^2}} \pa{\EE{M_{k+1,t} 
\Sigma\pa{\Sigma + \lambda I}^{-1}M_{k+1,t}^*} - \EE{M_{k,t} \Sigma\pa{\Sigma + \lambda I}^{-1}M_{k,t}^*}}.
 \end{align*}
 The result follows from noticing that $\Sigma \preccurlyeq \Sigma^2 \pa{\Sigma + \lambda I}^{-1}$.
\end{proof}
Now we are ready to prove Lemma~\ref{lem:noise}.

For any $t$, we have
\begin{align*}
&\Delta_{2,t} = \EE{\sum_{k=1}^t \sum_{j=1}^t 
\iprod{\varepsilon_j M_{j,t} x_j}{\Sigma \pa{\Sigma + \lambda I}^{-1} M_{k,t} x_k \varepsilon_k}}
\\
&= \EE{\sum_{k=1}^t \iprod{\varepsilon_k M_{k,t}x_k}{\Sigma  \pa{\Sigma + \lambda I}^{-1} M_{k,t} x_k 
\varepsilon_k}} 
\\
&\qquad\qquad\qquad+ 2 \EE{\sum_{k=1}^t \sum_{j=k+1}^{t} \iprod{\varepsilon_j M_{j,t}x_j}{\Sigma \pa{\Sigma + 
\lambda I}^{-1} M_{k,t} x_k \varepsilon_k}}
\\
&= \trace{\EE{\sum_{k=1}^t \varepsilon_k^2 M_{k,t} \pa{x_k \otimes x_k} M_{k,t}^* \Sigma \pa{\Sigma + \lambda I}^{-1}}}
\qquad(\mbox{using that $\EEk{\varepsilon_j x_j} = 0$ for $j> k$})
\\
&\le \sigma^2 \trace{\sum_{k=1}^t M_{k,t} \Sigma M_{k,t}^* \Sigma \pa{\Sigma + \lambda I}^{-1}},
\end{align*}
where the last inequality uses our assumption on the noise that $\EE{\varepsilon_k^2 \pa{x_k \otimes x_k}} \preccurlyeq \sigma^2 \Sigma$.
Now, using Lemma~\ref{lem:recurrence}, we obtain
\begin{align*}
 &\trace{\sum_{k=1}^t \EE{M_{k,t} \Sigma M_{k,t}}^* \Sigma \pa{\Sigma + \lambda I}^{-1}}
 \\
 &\le \frac{1}{\eta \pa{2 - \eta R^2}} \trace{\sum_{k=1}^t \pa{\EE{M_{k,t} 
\Sigma\pa{\Sigma + \lambda I}^{-1}M_{k,t}^*} - \EE{M_{k-1,t} \Sigma\pa{\Sigma + \lambda I}^{-1}M_{k-1,t}^*}} \Sigma 
\pa{\Sigma + \lambda I}^{-1}}
 \\
 &\le \frac{1}{\eta \pa{2 - \eta R^2}} \trace{\pa{\EE{M_{t-1,t} 
\Sigma\pa{\Sigma + \lambda I}^{-1}M_{t-1,t}^*} - \EE{M_{0,t} \Sigma\pa{\Sigma + \lambda I}^{-1}M_{0,t}^*}} \Sigma 
\pa{\Sigma + \lambda I}^{-1}}
 \\
 &\le \frac{1}{\eta \pa{2 - \eta R^2}} \trace{\Sigma^2 \pa{\Sigma + \lambda I}^{-2}},
\end{align*}
where the last step uses that $\EE{M_{t-1,t}} = \Sigma$. This concludes the proof.
\jmlrQED

\subsection{The proof of Lemma~\ref{lem:noiseless}}\label{app:noiseless}
We begin by noticing that 
\[
 \Delta_1
 \le
 \sum_{t=0}^\infty (1-\gamma\lambda)^{2t} \cdot \EE{\trace{M_{0,t}^* \Sigma \pa{\Sigma + \lambda I}^{-1} M_{0,t} \pa{w_0 - w^*}\otimes 
\pa{w_0 - w^*}}}
\]
holds since the sum only has positive elements.
Following \citet{DFB16} again, we define the operator $\TT$ acting on an arbitrary Hermitian operator $A$ as
\[
 \TT A = \Sigma A + A \Sigma - \eta \EE{\iprod{x_t}{A x_t} x_t\otimes x_t},
\]
and also introduce the operator $\Sw$ defined as
\[
 \Sw A = \EE{\iprod{x_t}{A x_t} x_t\otimes x_t},
\]
so that $\TT A = \Sigma A + A \Sigma - \eta \Sw A$. We note that $\Sw$ and $\TT$ are Hermitian and positive definite (the latter being true 
by our assumption about $\eta$). Finally, we define $\Iw$ as the identity operator acting on Hermitian matrices. With this notation, we can 
write
\[
 \EE{M_{0,t}^*A M_{0,t}} = \pa{\Iw - \eta \TT}^t A.
\]
Thus, defining $E_0 = \pa{w_0 - w^*}\otimes\pa{w_0 - w^*}$, we have
\begin{align*}
 &\sum_{t=0}^\infty (1-\gamma\lambda)^{2t} \EE{\trace{M_{0,t}^* \Sigma \pa{\Sigma + \lambda I}^{-1} M_{0,t} E_0}} 
 \\
 &\qquad \qquad =
 \sum_{t=0}^\infty (1-\gamma\lambda)^{2t} \trace{\pa{\Iw - \eta \TT}^{t} \left[\Sigma \pa{\Sigma + \lambda I}^{-1}\right] E_0}
 \\
 &\qquad \qquad =
 \sum_{t=0}^\infty (1-\gamma\lambda)^{t} \trace{\pa{\Iw - \eta\pa{1-\gamma\lambda} \TT - \gamma\lambda \Iw}^{t} \left[\Sigma 
\pa{\Sigma + \lambda I}^{-1}\right] E_0}
\\
 &\qquad \qquad \le
 \sum_{t=0}^\infty \trace{\pa{\Iw - \gamma \TT - \gamma\lambda \Iw}^{i} \left[\Sigma \pa{\Sigma + \lambda I}^{-1}\right] E_0}
 \\
 &\qquad \qquad =
 \frac{1}{\gamma}\trace{\pa{\TT + \lambda \Iw}^{-1}\left[\Sigma \pa{\Sigma + \lambda I}^{-1}\right] E_0},
\end{align*}
where the last step holds true if $\norm{\Iw - \eta \TT} < 1$. For a proof of this fact, we refer to Lemma~5 in \citet{DB15}.
Let us define $\TT_\lambda = \TT + \lambda \Iw$ and $W = \TT_\lambda^{-1}\left[\Sigma \pa{\Sigma + \lambda I}^{-1}\right]$, so that it 
remains to bound $ \gamma^{-1} \trace{WE_0}$. We notice that, by definition, $W$ satisfies
\begin{align}\label{eq:sigma-W}
 \Sigma \pa{\Sigma + \lambda I}^{-1} = \Sigma W + W \Sigma + \lambda W - \eta \Sw W.
\end{align}
Also introducing the operators $\UU_L$ and $\UU_R$ as the left- and right-multiplication operators with $\Sigma$, respectively, we get 
after reordering that
\begin{align*}
 W &= \pa{\UU_L + \UU_R + \lambda I}^{-1} \Sigma \pa{\Sigma + \lambda I}^{-1} + \eta \pa{\UU_L + \UU_R + \lambda I}^{-1} \Sw W
 \\
 &= \frac 12 \Sigma \pa{\Sigma + \lambda I}^{-1} \pa{\Sigma + \textstyle{\frac{\lambda}{2}} I}^{-1} + \eta \pa{\UU_L + \UU_R + \lambda 
I}^{-1} 
\Sw W.
\end{align*}
Using the fact that $\UU_L + \UU_R + \lambda I$ and its inverse are Hermitian, we can show
\begin{align*}
 \trace{WE_0} = \frac 12 \trace{\Sigma \pa{\Sigma + \lambda I}^{-1} \pa{\Sigma + \textstyle{\frac{\lambda}{2}} I}^{-1} E_0} + \eta 
\trace{\Sw W \pa{\UU_L + \UU_R + \lambda I}^{-1}E_0}.
\end{align*}
Furthermore, by again following the arguments\footnote{
This result  is  proven for finite dimension by \citet{DFB16}, but  can be easily generalized to infinite dimensions.}  of 
\citet[pp.~28]{DFB16}, we can also show
\[
 \pa{\UU_L + \UU_R + \lambda I}^{-1}E_0 \preccurlyeq \frac{\iprod{w_0 - w^*}{\pa{\Sigma + \frac{\lambda}{2} I}^{-1} \pa{w_0 - 
w^*}}}{2} I.
\]
Since $\Sw W$ is positive, this leads to the bound
\begin{align}\label{eq:webound}
 \trace{WE_0} \le \frac 12 \trace{\Sigma \pa{\Sigma + \lambda I}^{-1} \pa{\Sigma + \textstyle{\frac{\lambda}{2}} I}^{-1} E_0} + 
\frac{\eta\iprod{w_0 - 
w^*}{\pa{\Sigma + \frac{\lambda}{2} I}^{-1} \pa{w_0 - 
w^*}}}{2} \trace{\Sw W},
\end{align}
so it remains to bound $\trace{\Sw W}$. On this front, we have
\begin{align*}
 \trace{\Sw W} &= \trace{\EE{\iprod{x_t}{W x_t} x_t\otimes x_t}} \le R^2 \trace{W \Sigma}
\end{align*}
by our assumption on the covariates. Also, by Equation~\eqref{eq:sigma-W}, we have
\begin{align*}
 \trace{\Sigma \pa{\Sigma + \lambda I}^{-1}} &= 2 \trace{W\Sigma} + \lambda \trace{W} - \eta \trace{\Sw W}
 \\
 &= 2 \trace{W\pa{\Sigma + \textstyle{\frac{\lambda}{2}} I}} - \eta \trace{\Sw W}
 \\
 &\ge 2 \trace{W \Sigma } - \eta \trace{\Sw W}
 \\
 &\ge \pa{\frac{2}{R^2} - \eta} \trace{\Sw W}
 \\
 &\ge \frac{ \trace{\Sw W}}{R^2},
\end{align*}
by crucially using the assumption $\eta \le 1/2R^2$. Plugging into Equation~\eqref{eq:webound} and using $\eta R^2 \le 2$ again proves the 
lemma.
\jmlrQED

\section{Analysis under additive noise}\label{app:additive}
In this section, we consider the ``additive-noise'' model of \citet{DFB16}.  This model assumes that the learner has 
access to the gradient estimator $f_t'(w) = \Sigma w - y_t x_t$, that is, the gradient is subject to the noise vector $\xi_t = y_t x_t - 
\EE{y_t x_t}$ which doesn't depend on the parameter vector $w$. This assumption is admittedly very strong, and we mainly include our 
analysis for this case for didactic purposes. Indeed, the analysis for this case is significantly simpler than in the setting considered in 
the main body of our paper.

In this setting, stochastic gradient descent takes the form
\begin{equation}\label{eq:additive}
w_{t} = w_{t-1} - \eta (\Sigma w_{t-1} - x_{t} y_{t}),
\end{equation}
which is the unregularized counterpart of the iteration already introduced in Section~\ref{sec:pre} as Equation~\eqref{noise_iter}.
Again, we will study the geometric average
\begin{equation}\label{eq:avg_additive}
 \tw_t = \frac{\sum_{k=0}^{t} \pa{1-\eta\lambda}^k w_k}{\sum_{j=0}^t (1-\eta\lambda)^j}.
\end{equation}
For the analysis, we recall the definition $\xi_t = x_t y_t - \EE{x_t y_t}$ and study the evolution of $\tw_t - w^*$:
\begin{align*}
 \tw_t - w^* &= \pa{I - \eta\Sigma} \pa{\tw_{t-1} - w^*} + \eta \xi_t
 \\
 &= \pa{I - \eta\Sigma}^t \pa{\tw_{0} - w^*} + \eta \sum_{k=1}^t \pa{I - \eta\Sigma}^{t-k} \xi_k.
\end{align*}
We prove the following performance guarantee about this algorithm:
\begin{proposition}
 Suppose that $V = \EE{\xi_t\otimes \xi_t} \le \tau^2 \Sigma$ for some $\sigma^2>0$ and assume that $\eta\le \lambda_{\max}(\Sigma)$ and 
$\lambda \in [0,1/\eta)$. Then, the iterates computed by the recursion given by Equations~\eqref{eq:additive} 
and~\eqref{eq:avg_additive} satisfy
 \begin{align*}
 \EE{\Delta(\tw_n)}
 &\le \pa{\frac{\eta}{\gamma}}^2\pa{\frac{\gamma\lambda}{\pa{2-\gamma\lambda}} + 
\frac{2}{ \pa{2-\gamma\lambda}\pa{n+1}}}\trace{\Sigma \pa{\Sigma + \lambda I}^{-2} V}
 \\
 &\qquad\qquad + \pa{\lambda + \frac{1}{\gamma\pa{n+1}}}^2 \norm{\Sigma^{1/2}\pa{\Sigma + \lambda I}^{-1} \pa{\tw_0 -
w^*}^2}.
\end{align*}
\end{proposition}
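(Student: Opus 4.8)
The plan is to mirror the structure of the proof of Theorem~\ref{thm:main}, while exploiting the fact that in the additive-noise model the noise vectors $\xi_k$ are independent of the iterates and propagate through the \emph{deterministic} contraction $\pa{I-\eta\Sigma}$ rather than through the random operators $M_{j,t}$. This makes the analysis essentially exact and considerably lighter than the main proof.

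First I would substitute the closed-form expansion $w_t - w^* = \pa{I-\eta\Sigma}^t\pa{w_0-w^*} + \eta\sum_{k=1}^t \pa{I-\eta\Sigma}^{t-k}\xi_k$ into the geometric average $\tw_n - w^* = c_n\sum_{t=0}^n \pa{1-\gamma\lambda}^t\pa{w_t - w^*}$, with $c_n$ as in Section~\ref{sec:analysis}. The crucial algebraic simplification is the identity $\pa{1-\gamma\lambda}\pa{I-\eta\Sigma} = I - \gamma\pa{\Sigma+\lambda I}$, which follows from the stepsize relation $\eta = \gamma/\pa{1-\gamma\lambda}$. Writing $A = I - \gamma\pa{\Sigma+\lambda I}$, this collapses every product $\pa{1-\gamma\lambda}^t\pa{I-\eta\Sigma}^{t-k}$ (appearing after swapping the order of summation in the noise term) into $\pa{1-\gamma\lambda}^k A^{t-k}$, so that the averaged error splits into a deterministic ``bias'' part $c_n\sum_{t=0}^n A^t\pa{w_0-w^*}$ and a zero-mean ``noise'' part $c_n\eta\sum_{k=1}^n \pa{1-\gamma\lambda}^k\bpa{\sum_{s=0}^{n-k}A^s}\xi_k$.

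Since the noise part has zero mean and is independent of the bias part, the excess risk decomposes \emph{exactly} as $\EE{\Delta(\tw_n)} = \norm{\Sigma^{1/2}\cdot(\text{bias})}^2 + \EE{\norm{\Sigma^{1/2}\cdot(\text{noise})}^2}$, with no cross term. For the bias I would use $\sum_{t=0}^n A^t \preccurlyeq \sum_{t=0}^\infty A^t = \gamma^{-1}\pa{\Sigma+\lambda I}^{-1}$, valid once the stepsize assumption guarantees $0\preccurlyeq A \preccurlyeq I$, which gives the bound $c_n^2\gamma^{-2}\norm{\Sigma^{1/2}\pa{\Sigma+\lambda I}^{-1}\pa{w_0-w^*}}^2$. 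For the noise, independence of the $\xi_k$ kills all cross terms, so the expectation reduces to a single sum $c_n^2\eta^2\sum_{k=1}^n\pa{1-\gamma\lambda}^{2k}\trace{\Sigma B_k^2 V}$ with $B_k = \sum_{s=0}^{n-k}A^s$; bounding $B_k^2\preccurlyeq\gamma^{-2}\pa{\Sigma+\lambda I}^{-2}$ and using $V\preccurlyeq\tau^2\Sigma$ together with monotonicity of $X\mapsto\trace{XV}$ for $V\succcurlyeq 0$ produces the factor $\eta^2\gamma^{-2}\trace{\Sigma\pa{\Sigma+\lambda I}^{-2}V}$ times $c_n^2\sum_{k=1}^n\pa{1-\gamma\lambda}^{2k}$.

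Finally I would invoke Lemma~\ref{lem:cnbound} to replace $c_n^2\gamma^{-2}$ by $\pa{\lambda + \frac{1}{\gamma(n+1)}}^2$ in the bias term and $c_n^2\sum_{k=1}^n\pa{1-\gamma\lambda}^{2k}$ by $\frac{\gamma\lambda}{2-\gamma\lambda} + \frac{2}{(2-\gamma\lambda)(n+1)}$ in the noise term, recovering the two terms in the claimed bound. I expect the only genuinely delicate point to be the algebraic collapse in the second paragraph: correctly interchanging the order of summation in the double sum and checking that the stepsize identity turns the geometric weight times the contraction into a single power of $A$. Everything downstream is then a routine operator-monotonicity and trace computation, much lighter than the corresponding steps in Appendices~\ref{app:noise} and~\ref{app:noiseless}, since the random operators $M_{j,t}$ are here replaced by the fixed resolvent $\gamma^{-1}\pa{\Sigma+\lambda I}^{-1}$.
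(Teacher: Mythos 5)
Your proposal is correct and follows essentially the same route as the paper's own proof: the same closed-form expansion of $w_t - w^*$, the same use of the identity $\pa{1-\gamma\lambda}\pa{I-\eta\Sigma} = I - \gamma\Sigma - \gamma\lambda I$ after swapping the order of summation, the same exact bias/variance split via independence and zero mean of the $\xi_k$, and the same final appeal to Lemma~\ref{lem:cnbound}. The only cosmetic difference is that the paper sums the finite geometric series exactly and then bounds the resulting $\bigl(I - \pa{I-\gamma\Sigma-\gamma\lambda I}^{n-j+1}\bigr)$ factors by $I$, whereas you bound the partial sums directly by the infinite series $\gamma^{-1}\pa{\Sigma+\lambda I}^{-1}$; these are equivalent under the stepsize condition.
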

\begin{proof}
Recalling the notation $c_n = \pa{\sum_{t=0}^n (1-\gamma\lambda)^t}^{-1}$, the geometric average can be written as
\begin{align*}
 \tw_n - w^* &= c_n \cdot \sum_{t=0}^{n} \pa{1-\gamma\lambda}^t \pa{\tw_t - w^*}
 \\
 &= c_n \cdot \sum_{t=0}^{n} \pa{1-\gamma\lambda}^t \pa{\pa{I-\eta\Sigma}^t \pa{\tw_0 - w^*} + \eta \sum_{j=1}^t \pa{I - 
\eta\Sigma}^{t-j} \xi_j}
\\
 &= c_n \cdot \pa{\sum_{t=0}^{n} \pa{1-\gamma\lambda}^t \pa{I-\eta\Sigma}^t \pa{\tw_0 - w^*} + \eta \sum_{t=0}^{n} 
\pa{1-\gamma\lambda}^t \sum_{j=1}^t \pa{I - \eta\Sigma}^{t-j} \xi_j}
\\
 &= c_n \cdot \pa{\sum_{t=0}^{n} \pa{I-\gamma \Sigma - \gamma\lambda I}^t \pa{\tw_0 - w^*} + \eta \sum_{j=1}^n 
\pa{1-\gamma\lambda}^j \sum_{t=j}^{n} \pa{1-\gamma\lambda}^{t-j} \pa{I - \eta\Sigma}^{t-j} \xi_j}
\\
 &= c_n \cdot \pa{\sum_{t=0}^{n} \pa{I-\gamma \Sigma - \gamma\lambda I}^t \pa{\tw_0 - w^*} + \eta \sum_{j=1}^n 
\pa{1-\gamma\lambda}^j \sum_{t=j}^{n} \pa{I - \gamma\Sigma - \gamma \lambda I}^{t-j} \xi_j} 
\\
 &= c_n \cdot \pa{\sum_{t=0}^{n} \pa{I-\gamma \Sigma - \gamma\lambda I}^t \pa{\tw_0 - w^*} + \eta \sum_{j=1}^n 
\pa{1-\gamma\lambda}^j \sum_{t=0}^{n-j} \pa{I - \gamma\Sigma - \gamma \lambda I}^{t} \xi_j}
\\
 &= c_n \cdot \BPA{\frac{1}{\gamma}\pa{I - \pa{I-\gamma \Sigma - \gamma\lambda I}^{-t}} \pa{\Sigma + \lambda I}^{-1} \pa{\tw_0 - 
w^*} \\
 &\qquad + \frac{\eta}{\gamma} \sum_{j=1}^n 
\pa{1-\gamma\lambda}^j \pa{I-\pa{I-\gamma\Sigma - \gamma\lambda I}^{t-j}} \pa{\Sigma + \lambda I}^{-1} \xi_j}
\end{align*}
Now, exploiting the assumption that the noise is i.i.d.~and zero-mean, we get
\begin{align*}
 \EE{\Delta(\tw_n)} &= \EE{\norm{\Sigma^{1/2}\pa{\tw_t - w^*}}^2} 
 \\
 &= c_t^2 \cdot \pa{\frac{\eta}{\gamma}}^2 \sum_{j=1}^t \pa{1-\gamma\lambda}^{2j} 
 \trace{\pa{I-\pa{I-\gamma\Sigma - \gamma\lambda I}^{t-j}}^2 \Sigma \pa{\Sigma + \lambda I}^{-2} V}
 \\
 &\qquad + c_t^2 \cdot \frac{1}{\gamma^2}\norm{\pa{I - \pa{I-\gamma \Sigma - \gamma\lambda I}^{-t}} \Sigma^{1/2}\pa{\Sigma + 
\lambda}^{-1} 
\pa{\tw_0 - w^*}}^2
 \\
 &\le c_t^2 \pa{\frac{\eta}{\gamma}}^2 \sum_{j=1}^t \pa{1-\gamma\lambda}^{2j} \trace{\Sigma \pa{\Sigma + \lambda I}^{-2} V}
 \\
 &\qquad + c_t^2 \cdot \pa{\frac{1}{\gamma}}^2 \norm{\Sigma^{1/2}\pa{\Sigma + \lambda I}^{-1} \pa{\tw_0 - w^*}}^2,
\end{align*}
The proof is concluded by appealing to Lemma~\ref{lem:cnbound}.
\end{proof}

\end{document}